\documentclass{article}
\usepackage{iclr2027_conference,times}
\usepackage{amsmath,amsfonts,bm}

\def\eqref#1{equation~\ref{#1}}
\def\1{\bm{1}}

\DeclareMathAlphabet{\mathsfit}{\encodingdefault}{\sfdefault}{m}{sl}
\SetMathAlphabet{\mathsfit}{bold}{\encodingdefault}{\sfdefault}{bx}{n}

\def\sP{{\mathbb{P}}}

\usepackage{amsmath,amssymb,amsthm}
\usepackage{booktabs}
\usepackage{graphicx}
\usepackage{tabularx}
\usepackage{siunitx}
\usepackage{longtable}
\usepackage{float}
\usepackage[hidelinks]{hyperref}
\usepackage{url}
\usepackage{placeins}

\AddToHook{env/table/begin}{\setlength{\belowcaptionskip}{6pt}}
\AddToHook{env/table*/begin}{\setlength{\belowcaptionskip}{6pt}}

\newtheoremstyle{boldplain}
  {\topsep}{\topsep}{\itshape}{}{\bfseries}{.}{5pt plus 1pt minus 1pt}
  {\thmname{#1}\thmnumber{ #2}\thmnote{ (#3)}}
\newtheoremstyle{bolddefinition}
  {\topsep}{\topsep}{}{}{\bfseries}{.}{5pt plus 1pt minus 1pt}
  {\thmname{#1}\thmnumber{ #2}\thmnote{ (#3)}}
\theoremstyle{boldplain}
\newtheorem{theorem}{Theorem}
\newtheorem{proposition}{Proposition}
\newtheorem{corollary}{Corollary}
\theoremstyle{bolddefinition}
\newtheorem{definition}{Definition}
\theoremstyle{boldplain}
\newcommand{\method}{loss-conditioned state execution}
\newcommand{\LCB}{\operatorname{LCB}}

\title{When Should a World Model Move?\\
Loss-Conditioned State Execution}

\author{%
\normalfont
Jintao Xu\textsuperscript{1,}\thanks{Equal contribution.}\quad
Zhengyu Chen\textsuperscript{1,}\footnotemark[1]\quad
Ben Zhang\textsuperscript{1,}\footnotemark[1]\quad
Yongzhi Qi\textsuperscript{1,}\thanks{Corresponding author.}\quad
Jianshen Zhang\textsuperscript{1}\\[3pt]
\textsuperscript{1}Supply Chain Tech Team Y, JD.com\\[3pt]
{\footnotesize\texttt{\{xujintao.3014,chenzhengyu8,zhangben22,qiyongzhi1,zhangjianshen\}@jd.com}}
}

\iclrfinalcopy

\begin{document}
\maketitle
\lhead{}

\begin{abstract}
We introduce loss-conditioned state execution, a model-agnostic method that decides whether to execute
a world model's fixed feasible proposal or retain the current state. Predictive informativeness alone,
however, does not establish whether an update will reduce downstream loss. Occurrence ranking can
approach perfection while persistence remains the unique absolute-loss Bayes action. Two transition laws
can also share occurrence information and conditional variance yet require opposite absolute-loss
decisions. We formalize state movability as the existence of a loss-reducing feasible correction and
distinguish it from the benefit of a particular proposal. Our method constructs a loss-specific feasible
proposal from a predictive distribution and evaluates its groupwise bounded-loss gain over persistence on
independent calibration units. The proposal is executed only in groups with a positive simultaneous lower
confidence bound. For fixed proposals and groups with bounded unit losses, we prove that every accepted
group has lower expected loss than persistence with high probability when calibration units are i.i.d.
draws from the target population. Experiments on public forecasting and action-conditioned dynamics benchmarks show supported
updates and a trade-off between certification and coverage. On
28,684 held-out M4 Monthly series, the method executes the proposal for 14.0\% of series and achieves
bounded loss 0.588, compared with 0.599 for persistence and 0.621 for always executing the proposal. The
paired 95\% bootstrap intervals for both comparisons lie below zero. In constrained forecasting of
six unhealthy-inventory types from JD.com, a leading e-retailer in China, strong occurrence-ranking signal
coexists with a loss-based preference for persistence, illustrating why event predictability and state
execution must be evaluated separately.
\end{abstract}

\section{Introduction}

World models map histories and optional actions to distributions over future states. They are commonly
judged by likelihood, calibration, or downstream control, with rollout length and uncertainty used to
limit unreliable simulation. To select a state prediction, a model must also decide:\\*
\makebox[\linewidth][c]{\emph{Move away from the current state, or persist?}}
\par\vspace{-\parskip}
\noindent This decision is loss dependent. Under absolute loss, persistence is optimal whenever zero change is a
conditional median, whereas squared loss depends on the conditional mean and asymmetric costs select a
task-specific quantile \citep{gneiting2011point}. Consequently, a model can rank change events almost perfectly yet induce no state
update that improves absolute error over persistence. This distinction matters particularly in sticky or sparse dynamics, where
event prediction and loss-reducing state movement are different objectives. A scalar variance estimate cannot generally
resolve the ambiguity: we construct transition
laws with identical occurrence information and conditional variance but opposite loss-optimal movement
decisions. We call a state \emph{movable} under a declared loss when some feasible correction has lower
conditional risk than persistence under the data-generating transition law. This population property is
distinct from the benefit of a particular fitted proposal and the calibration evidence supporting its execution.

In \method, a predictive transition distribution induces a Bayes correction under the
stated loss, which is then mapped into the feasible state set. Training data define interpretable groups before calibration. An independent calibration split
evaluates the \emph{executed proposal}'s bounded-loss gain over persistence. The gate executes proposals only
for groups with a positive simultaneous lower confidence bound. Otherwise, the state prediction equals
the current state.
Figure~\ref{fig:method} shows the resulting proposal--evaluation--fallback interface.

Our main contributions are organized around three aspects:
\begingroup
\setlength{\leftmargini}{1.2em}
\setlength{\labelsep}{0.5em}
\setlength{\labelwidth}{0.7em}
\begin{itemize}
  \item \textbf{Loss-specific state movability.} We distinguish the existence of a loss-reducing feasible
  correction from the benefit of a fixed model proposal. We characterize persistence regions for common
  scalar losses and prove that identical occurrence information and conditional variance can imply
  opposite absolute-loss execution decisions.
  \item \textbf{Proposal benefit and certified execution.} We bound the gap between state movability
  and fixed-proposal benefit in terms of transition-distribution error and feasibility or optimization
  suboptimality. Independent calibration evaluates the executed proposal against persistence and
  certifies positive group-average bounded-loss gain under explicit sampling assumptions.
  \item \textbf{Cross-domain movement regimes.} Across synthetic settings, Monash and M4 forecasting
  benchmarks, Minari FourRooms and MuJoCo, and a constrained six-type unhealthy-inventory case from JD.com,
  we distinguish supported updates, absent fitted-proposal gain, and insufficient calibration evidence.
  The experiments quantify the trade-off between certification, update coverage, and prediction error.
\end{itemize}
\endgroup

\begin{figure}[!h]
  \centering
  \includegraphics[width=\linewidth]{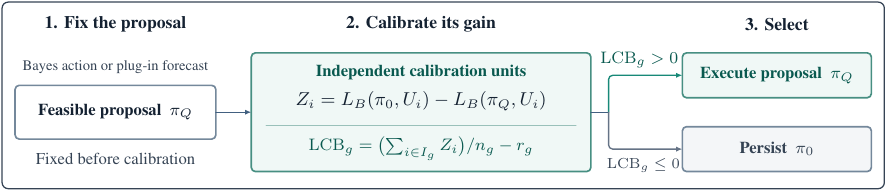}
  \caption{Loss-conditioned state execution. A fixed feasible proposal is executed when independent
  calibration certifies positive groupwise gain over persistence under the declared unit loss.}
  \label{fig:method}
\end{figure}
\FloatBarrier

\section{Related Work}

\paragraph{World models and conservative rollout.}
Learned dynamics support planning by simulating future states. Methods by \citet{ha2018worldmodels},
\citet{chua2018pets}, \citet{janner2019mbpo}, \citet{yu2020mopo}, and \citet{kidambi2020morel} use
probabilistic ensembles, short rollouts, pessimism, or unknown-region penalties to limit accumulated model
error. Uncertainty-aware rollout adaptation governs where or how far learned dynamics are used
\citep{frauenknecht2024macura}. World-action models adapt imagination depth using planning gain and
computation cost \citep{lu2026riseadaptive} or action-chunk length using future--reality consistency
\citep{wang2026trustimagination}. These approaches select the extent of imagination or execution. Our
criterion instead evaluates whether a fixed feasible state correction improves over persistence under
the declared loss, complementing the choice of transition learner and rollout horizon.

\paragraph{Probabilistic and decision-focused prediction.}
\citet{gneiting2007scoring} and \citet{gneiting2014forecasting} describe proper scoring rules that evaluate
predictive distributions without committing to a single-valued decision. Loss-consistent point forecasts target functionals
such as means or quantiles \citep{gneiting2011point}. The area under the receiver operating characteristic curve (AUROC)
summarizes ranking rather than a fixed-cost
decision \citep{hand2009auc}. Conditional predictive-ability methods use observable information and loss
differences to test or select between forecast rules at future origins under possible misspecification
\citep{giacomini2006conditional}. Predictive-to-prescriptive methods connect estimated outcomes to
optimization objectives \citep{bertsimas2020prescriptive}. Our formulation compares a post-feasibility
state correction with persistence under the declared loss, distinguishing the existence
of a beneficial feasible correction (state movability) from the gain of the model-induced correction
(proposal benefit). Independent calibration assesses evidence for executing the fixed proposal.

\paragraph{Selective prediction, deferral, and time-series forecasting.}
\citet{elyaniv2010selective}, \citet{geifman2019selectivenet}, and \citet{noskov2024selective} study
selective prediction, which trades output coverage for conditional risk by withholding uncertain predictions.
Learning to defer instead routes a decision to an external expert \citep{mozannar2020defer}. \citet{tomar2026shapelets} and
\citet{inacio2026selective} reject difficult time-series predictions using learned structural or transfer scores.
Our rejection semantics assign persistence, a feasible baseline prediction, for every rejected proposal.
Coverage is therefore determined by evidence that the fixed candidate improves over persistence under the
declared loss.

\paragraph{Baseline fallback and statistical certification.}
Safe policy improvement bootstraps with a baseline where uncertainty is high \citep{laroche2019spibb}, and
runtime-assurance architectures switch between advanced and verified baseline controllers
\citep{chen2022simplexdrive}. Risk-controlled policy post-processing instead maximizes agreement with a
deterministic baseline subject to a chance-risk budget \citep{joshi2026riskcontrolled}. Certificate-driven
time-series calibration trains an online gated residual over a fixed backbone and uses martingale
PAC-Bayesian certificates under temporal dependence and shift \citep{huang2026certificate}.
\method{} uses independent calibration units and the learn-then-test construction
\citep{angelopoulos2025learntest} to certify groupwise gain over current-state persistence for a
post-feasibility proposal fixed before calibration. Appendix~\ref{app:method-comparison} compares the
decision objects, fallback behavior, and evaluation targets of these methods.

\section{Loss-Conditioned State Execution}

Let $t$ index transition time and $H_t$ the observed history, including any observed action $a_t$. A transition model
with parameters $\theta$ specifies a conditional predictive distribution $Q_\theta(\cdot\mid H_t)$ over the state correction
$\Delta_t:=S_{t+1}-S_t$. Let $\mathcal D(H_t)$ be the set of feasible corrections and assume
$0\in\mathcal D(H_t)$, so persistence is executable. For a specified horizon $h$, the same formulation
uses $\Delta_{t,h}:=S_{t+h}-S_t$ and a fixed horizon-specific proposal rule. All rules are measurable,
and all conditional risks and risk infima below are finite. For a raw correction domain
$\mathcal A(H_t)$ and loss $\ell$, assume the Bayes minimum is attained and define
\begin{equation*}
  b_Q(H_t)\in\arg\min_{c\in\mathcal A(H_t)}
  \mathbb E_{\Delta\sim Q_\theta(\cdot\mid H_t)}[\ell(c,\Delta)].
\end{equation*}
Let $F_{H_t}$ map proposed states into $S_t+\mathcal D(H_t)$ and define the \emph{executed proposal}
$c_Q(H_t):=F_{H_t}(S_t+b_Q(H_t))-S_t$. Direct optimization over $\mathcal D(H_t)$ uses
$F_{H_t}$ as the identity on feasible states. The persistence correction is $c_0(H_t):=0$, and
$P(\cdot\mid H_t)$ denotes the data-generating conditional law of $\Delta_t$.

\begin{definition}[State movability and proposal benefit]
\label{def:movability}
For conditional risk $R_P(c\mid H_t):=\mathbb E_P[\ell(c,\Delta_t)\mid H_t]$, define
\begin{align*}
  M_\ell(P\mid H_t)
  &:=R_P(0\mid H_t)-\inf_{c\in\mathcal D(H_t)}R_P(c\mid H_t),\\
  V_\ell(Q_\theta,P\mid H_t)
  &:=R_P(0\mid H_t)-R_P(c_Q(H_t)\mid H_t).
\end{align*}
The quantity $M_\ell$ measures state movability, and the state is \emph{movable} when $M_\ell>0$.
The quantity $V_\ell$ measures proposal benefit, and the executed proposal has \emph{population support}
when $V_\ell>0$.
\end{definition}

Since $0\in\mathcal D(H_t)$, $M_\ell\geq0$, with equality exactly when persistence is optimal.
Feasibility of $c_Q(H_t)$ gives $V_\ell\leq M_\ell$, so positive proposal benefit implies movability.
The gap $M_\ell-V_\ell$ is the executed proposal's regret relative to the optimal feasible risk.
Independent calibration below assesses group-average proposal benefit under a bounded unit loss.

\subsection{Persistence regions and ranking--movement separation}

Whether persistence is Bayes-optimal depends on the downstream loss. The following result characterizes
its optimality under three common scalar losses.

\begin{proposition}[Bayes persistence under common scalar prediction losses]
\label{prop:common-losses-main}
For a fixed context $H$, let $\Delta$ be its scalar state correction and optimize over $c\in\mathbb R$.
Then:
\smallskip

\noindent\textbf{(i) Squared loss.} If $\mathbb E[\Delta^2\mid H]<\infty$, zero is the unique Bayes
correction if and only if $\mathbb E[\Delta\mid H]=0$.

\smallskip

\noindent\textbf{(ii) Pinball loss.} For $\tau\in(0,1)$, define
$\rho_\tau(\Delta-c):=\tau(\Delta-c)_{+} +(1-\tau)(c-\Delta)_{+}$, where
$(x)_{+}:=\max\{x,0\}$. If $\mathbb E[|\Delta|\mid H]<\infty$, zero is Bayes-optimal if and only if
\begin{equation}
  \sP(\Delta<0\mid H)\leq\tau\leq\sP(\Delta\leq0\mid H).
  \label{eq:zero-quantile}
\end{equation}

\smallskip

\noindent\textbf{(iii) Asymmetric linear loss.} For under- and over-prediction costs $c_u,c_o>0$,
consider $c_u(\Delta-c)_{+} +c_o(c-\Delta)_{+}$. If $\mathbb E[|\Delta|\mid H]<\infty$, zero is
Bayes-optimal if and only if the condition in (ii) holds with $\tau=c_u/(c_u+c_o)$.
\end{proposition}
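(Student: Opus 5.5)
Each part is a statement about a one-dimensional convex risk $r(c):=\mathbb E[\ell(c,\Delta)\mid H]$ on $\mathbb R$. The plan is to characterize its minimizers by first-order (sub)gradient conditions and then read off when $c=0$ qualifies. We work conditionally on $H$ throughout and write $\mathbb E$, $\sP$ for the conditional law. The integrability assumptions make $r$ finite everywhere: $\mathbb E[\Delta^2]<\infty$ for (i), and $\mathbb E|\Delta|<\infty$ for (ii)--(iii), since $\rho_\tau(\Delta-c)\le\max(\tau,1-\tau)(|\Delta|+|c|)$.

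\textbf{Squared loss.} Expand $r(c)=\mathbb E[(c-\Delta)^2]=(c-\mathbb E\Delta)^2+\operatorname{Var}(\Delta)$. This is strictly convex with unique minimizer $c^\star=\mathbb E[\Delta\mid H]$. Hence zero is the unique Bayes correction iff $\mathbb E[\Delta\mid H]=0$. The only thing to note is that finiteness of the second moment justifies the expansion.

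\textbf{Pinball loss.} The function $r$ is convex and finite, so $0$ is a minimizer iff the right derivative satisfies $r'_+(0)\ge0$ and the left derivative satisfies $r'_-(0)\le0$. Compute these by differentiating under the expectation. For each fixed $\Delta$, the map $c\mapsto\rho_\tau(\Delta-c)$ has right derivative $-\tau\mathbf 1\{\Delta>c\}+(1-\tau)\mathbf 1\{\Delta\le c\}$. Its difference quotients are bounded by $\max(\tau,1-\tau)$, so dominated convergence gives
\begin{equation*}
r'_+(0)=-\tau\,\sP(\Delta>0)+(1-\tau)\,\sP(\Delta\le 0)=\sP(\Delta\le0)-\tau .
\end{equation*}
Symmetrically, $r'_-(0)=\sP(\Delta<0)-\tau$. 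The two one-sided conditions become $\sP(\Delta<0\mid H)\le\tau\le\sP(\Delta\le0\mid H)$, which is exactly \eqref{eq:zero-quantile}.

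The step needing the most care is the justification that a convex function on $\mathbb R$ is minimized at a point iff $r'_-(0)\le0\le r'_+(0)$. A second point of care is the interchange of the one-sided limit with the expectation. The interchange must handle the atom at $\Delta=0$ correctly, and this is where the asymmetry between $<$ and $\le$ enters. We use monotonicity of convex difference quotients together with the uniform bound $|\rho_\tau(\Delta-c)-\rho_\tau(\Delta)|\le\max(\tau,1-\tau)|c|$ to apply dominated convergence.

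\textbf{Asymmetric linear loss.} Observe that
\begin{equation*}
c_u(\Delta-c)_+ + c_o(c-\Delta)_+=(c_u+c_o)\,\rho_\tau(\Delta-c)\quad\text{with }\tau=c_u/(c_u+c_o)\in(0,1).
\end{equation*}
Multiplying the risk by the positive constant $c_u+c_o$ does not change its set of minimizers. Part (iii) therefore follows directly from (ii).
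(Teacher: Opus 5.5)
Your proof is correct and follows essentially the same route as the paper. It completes the square for squared loss, characterizes minimizers of the convex pinball risk by its one-sided derivatives $\sP(\Delta<0\mid H)-\tau$ and $\sP(\Delta\le0\mid H)-\tau$ at zero, and rescales by $c_u+c_o$ to reduce (iii) to (ii); the one addition is that you justify those derivative formulas via the Lipschitz bound and dominated convergence, which the paper states without derivation.
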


Absolute loss corresponds to the median case $\tau=1/2$, yielding the following sign-based
characterization.

\begin{corollary}[Signed-change persistence under absolute loss]
\label{cor:absolute-main}
Let $\Delta$ be integrable conditional on $H$. Over $c\in\mathbb R$, the persistence correction $c=0$
minimizes $\mathbb E[|\Delta-c|\mid H]$ if and only if
\begin{equation}
  \sP(\Delta<0\mid H)\leq\tfrac12,
  \qquad
  \sP(\Delta>0\mid H)\leq\tfrac12.
  \label{eq:zero-median}
\end{equation}
It is the unique minimizer when zero is the
unique conditional median.
\end{corollary}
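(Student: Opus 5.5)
The plan is to obtain the corollary as the special case $\tau=1/2$ of Proposition~\ref{prop:common-losses-main}(ii). For $\tau=1/2$ the pinball loss satisfies $\rho_{1/2}(\Delta-c)=\tfrac12|\Delta-c|$. Positive scaling does not change minimizers, so $c=0$ minimizes $\mathbb E[|\Delta-c|\mid H]$ exactly when it minimizes the $\tau=1/2$ pinball risk. The integrability hypothesis of (ii) is the one assumed here, so \eqref{eq:zero-quantile} applies. It gives optimality of zero if and only if $\sP(\Delta<0\mid H)\le\tfrac12\le\sP(\Delta\le0\mid H)$. The right-hand inequality is equivalent to $\sP(\Delta>0\mid H)=1-\sP(\Delta\le0\mid H)\le\tfrac12$. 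This yields \eqref{eq:zero-median}.

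For uniqueness, I would work with the convex function $g(c):=\mathbb E[|\Delta-c|\mid H]$. It is finite by integrability. Its right derivative is $g'_+(c)=\sP(\Delta\le c\mid H)-\sP(\Delta>c\mid H)$, and its left derivative is $g'_-(c)=\sP(\Delta<c\mid H)-\sP(\Delta\ge c\mid H)$. Both follow from dominated convergence applied to the difference quotients of $|\Delta-c|$, which are bounded by $1$.

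The set of minimizers of the convex function $g$ is the closed interval $\{c:g'_-(c)\le0\le g'_+(c)\}$. This is precisely the set of conditional medians, that is, points $m$ with $\sP(\Delta\le m\mid H)\ge\tfrac12$ and $\sP(\Delta\ge m\mid H)\ge\tfrac12$. Hence, if zero is the unique conditional median, the minimizer set is $\{0\}$, and persistence is the unique minimizer.

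The only mildly delicate step is identifying the minimizer set with the median set rather than merely showing that zero lies in it. I would handle this via the subgradient characterization above. It also doubles as a self-contained proof of the first part, should one prefer not to route through (ii).
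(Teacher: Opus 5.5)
Your proposal is correct and follows essentially the same route as the paper. The paper also treats absolute loss as twice the $\tau=1/2$ pinball loss, uses the one-sided derivatives $G_H(c^-)-\tau$ and $G_H(c)-\tau$ to identify the minimizers with the conditional medians, and reads off both the condition at $c=0$ and uniqueness when zero is the only median; your dominated-convergence justification of the one-sided derivatives supplies a step the paper states without proof.
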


\noindent\emph{Proofs of Proposition~\ref{prop:common-losses-main} and Corollary~\ref{cor:absolute-main}
are provided in Appendix~\ref{app:loss-conditioned-decisions}.}

This characterization permits occurrence ranking and state movement to separate.

\begin{proposition}[Near-perfect occurrence ranking does not imply absolute-loss movement]
\label{prop:ranking-separation}
For every $\varepsilon>0$, there exist a binary-valued transition distribution and an occurrence-ranking score whose
AUROC exceeds $1-\varepsilon$, yet persistence is the unique conditional Bayes correction under absolute
loss at every context.
\end{proposition}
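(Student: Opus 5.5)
Our plan is an explicit construction that is then verified with Corollary~\ref{cor:absolute-main}. We take a binary context $X\in\{0,1\}$, playing the role of $H$, with $\sP(X=1)=1/2$. The transition is binary-valued: $\Delta\in\{0,1\}$, with $\sP(\Delta=1\mid X=1)=p_1$ and $\sP(\Delta=1\mid X=0)=p_0$. We choose $0<p_0<p_1<1/2$. For the occurrence-ranking score we take $s=X$ itself, or more generally any strictly increasing function of $\sP(\Delta=1\mid X)$.

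The first step is the movement claim. At every context, $\sP(\Delta<0\mid X)=0<1/2$ and $\sP(\Delta>0\mid X)=p_X<1/2$, so by Corollary~\ref{cor:absolute-main} the correction $c=0$ is a Bayes correction under absolute loss. For uniqueness we check directly that zero is the unique conditional median: $\sP(\Delta\le c\mid X)=1-p_X>1/2$ for $c\in[0,1)$, while $\sP(\Delta\le c\mid X)=0$ for $c<0$. Equivalently, for $c\in(0,1]$ the risk is $\mathbb E|\Delta-c|=c(1-p_X)+(1-c)p_X$, which exceeds $p_X$ by $c(1-2p_X)>0$. Values $c>1$ and $c<0$ are worse still. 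Hence persistence is the unique minimizer at each context.

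The second step is the AUROC computation. AUROC is the probability that the score of a random positive ($\Delta=1$) exceeds that of a random negative ($\Delta=0$), with ties counted as $1/2$. By Bayes' rule, the positives come from $X=1$ with probability $\pi_1=p_1/(p_0+p_1)$. The negatives come from $X=0$ with probability $\nu_0=(1-p_0)/(2-p_0-p_1)$. Then
\[
\mathrm{AUROC}=\pi_1\nu_0+\tfrac12\bigl(\pi_1(1-\nu_0)+(1-\pi_1)\nu_0\bigr).
\]
Now let $p_1$ be fixed, say $p_1=1/4$, and let $p_0\to0$. Then $\pi_1\to1$, but $\nu_0\to 1/(2-p_1)=4/7$, which is not $1$. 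So AUROC tends to about $0.79$, not to $1$. This reveals the main obstacle: with only two contexts of equal mass, the negatives remain spread across both contexts.

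To fix this, we would make the context distribution unequal. Take $\sP(X=1)=q$ small, $p_1=1/4$ and $p_0=\delta$. Positives from $X=1$ then have mass $q/4$, and positives from $X=0$ have mass $(1-q)\delta$. We choose $\delta=q^2$, so that $\pi_1\to1$ as $q\to0$. Negatives come from $X=0$ with probability $(1-q)(1-\delta)/\bigl((1-q)(1-\delta)+3q/4\bigr)\to1$. With $\pi_1\to1$ and $\nu_0\to1$, the displayed formula gives AUROC $\to1$. Given $\varepsilon$, we pick $q$ small enough that AUROC $>1-\varepsilon$, say by checking $\pi_1\nu_0>1-\varepsilon$, since AUROC $\ge\pi_1\nu_0$. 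Throughout, $p_X<1/2$ at both contexts, so the uniqueness conclusion from the first step is unchanged.

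Finally, the statement asks for a ranking score, not necessarily the Bayes score, and $s=X$ suffices. We would remark that binary-valued $\Delta$ with all conditional event probabilities below $1/2$ is exactly the regime where Corollary~\ref{cor:absolute-main} forces persistence. Ranking quality depends only on how the contexts separate, which can be made arbitrarily sharp independently of this constraint.
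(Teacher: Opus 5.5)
Your construction is correct and is essentially the paper's: a binary context whose event-bearing value has vanishing mass $q$ (the paper's $w$), the score $s=X$, event probabilities below $1/2$ in both contexts, and uniqueness of the zero median under absolute loss. The paper takes $p_0=0$ outright, which makes $\pi_1=1$ and gives the closed form $\mathrm{AUROC}=1-\frac{w(1-r)}{2(1-wr)}$, so your choice $p_0=q^2$ and the lower bound $\mathrm{AUROC}\ge\pi_1\nu_0$ are valid but unnecessary.
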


Even exact occurrence probabilities together with conditional variance need not determine whether
persistence is optimal.

\begin{theorem}[Non-identification of movability from occurrence and conditional variance]
\label{thm:summary-nonidentification}
Consider scalar corrections over $\mathcal D(H)=\mathbb R$. Fix any marginal law of $H$, measurable
function $p(H)\in(1/2,1)$, and measurable scale function $\alpha(H)\in(0,\infty)$. There exist two conditional transition
laws $P_0$ and $P_1$ such that, for $j\in\{0,1\}$,
\[
 \sP_j(\Delta\neq0\mid H)=p(H),
 \qquad
 \operatorname{Var}_j(\Delta\mid H)=p(H)\alpha(H)^2,
\]
and both induce the same joint law of $(H,Y)$ for $Y:=\mathbf 1\{\Delta\neq0\}$. Nevertheless, under
absolute loss, persistence is uniquely optimal under $P_0$, whereas a strictly positive correction is
uniquely optimal under $P_1$.
\end{theorem}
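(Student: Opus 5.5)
The plan is to build both laws pointwise in $H$ as finitely supported distributions. Each is an explicit measurable function of $(p(H),\alpha(H))$, and each puts mass exactly $1-p(H)$ at $0$. Then $\sP_j(\Delta\neq0\mid H)=p(H)$ for both $j$. Since $Y=\mathbf 1\{\Delta\neq0\}$ and both laws share the fixed marginal of $H$, the joint law of $(H,Y)$ is the same under $P_0$ and $P_1$: it is the $H$-marginal times a Bernoulli$(p(H))$ kernel. The remaining work is to match the conditional variance and to read off the absolute-loss decision. For the decision I will use the standard fact behind Corollary~\ref{cor:absolute-main}: the minimizers of $c\mapsto\mathbb E[|\Delta-c|\mid H]$ are exactly the conditional medians. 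A point $m$ is the unique minimizer when $\sP(\Delta<m\mid H)<\tfrac12$ and $\sP(\Delta>m\mid H)<\tfrac12$.

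\emph{Law $P_0$ (symmetric).} Given $H$, put mass $1-p$ at $0$ and mass $p/2$ at each of $\pm\alpha$. The mean is $0$, so the variance is $p\alpha^2$ as required. Because $p<1$, we get $\sP_0(\Delta<0\mid H)=\sP_0(\Delta>0\mid H)=p/2<\tfrac12$. By Corollary~\ref{cor:absolute-main}, zero is therefore the unique conditional median, so persistence is uniquely optimal.

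\emph{Law $P_1$ (one-sided).} Given $H$, put mass $1-p$ at $0$ and mass $p/2$ at each of $s$ and $2s$, for some scale $s=s(H)>0$. The unique-median check at $m=s$ uses $p>1/2$ and $p<1$:
\[
\sP_1(\Delta<s\mid H)=1-p<\tfrac12,
\qquad
\sP_1(\Delta>s\mid H)=p/2<\tfrac12 .
\]
So $s>0$ is the unique minimizer, and a strictly positive correction is uniquely optimal.

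The only nontrivial step is calibrating $s$ to hit the variance exactly. A direct computation gives
\[
\operatorname{Var}_1(\Delta\mid H)=s^2 g(p),
\qquad
g(p):=\tfrac{5p}{2}-\tfrac{9p^2}{4}.
\]
To show $g(p)>0$ on $(1/2,1)$, write it as
\[
g(p)=p\Bigl[\tfrac{5}{2}-\tfrac{9p}{4}\Bigr]\ge p\Bigl[\tfrac52-\tfrac94\Bigr]=\tfrac p4>0,
\]
which is the general bound $p(1-k)^2/4$ specialised to $k=2$. Setting $s=\alpha\sqrt{p/g(p)}$ then gives variance exactly $p\alpha^2$. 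This $s$ is a continuous function of the measurable maps $p$ and $\alpha$, hence measurable, so $P_0$ and $P_1$ are bona fide kernels.

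Integrability is immediate because the supports are finite. This completes the plan; the variance matching is the only place where the restriction $p(H)\in(1/2,1)$ is used beyond the median checks.
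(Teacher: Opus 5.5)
Your proposal is correct and follows the paper's proof: the same symmetric three-point law $P_0$, a one-sided $P_1$ sharing the zero atom of mass $1-p(H)$, variance matching by rescaling, and the median check via Corollary~\ref{cor:absolute-main}. The only difference is that the paper takes $P_1$ to be the two-point law with mass $p$ at $\beta=\alpha/\sqrt{1-p}$. This gives the variance directly as $p(1-p)\beta^2=p\alpha^2$, and $\beta$ is the unique median because its atom carries mass $p>1/2$, so your computation of $g(p)$ is not needed.
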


\noindent\emph{Proofs of Proposition~\ref{prop:ranking-separation} and Theorem~\ref{thm:summary-nonidentification}
are provided in Appendix~\ref{app:loss-conditioned-decisions}.}

The identical joint law of $(H,Y)$ implies that the two transition laws share occurrence probabilities,
Bernoulli entropies, and the receiver operating characteristic (ROC) curve of every $H$-measurable
occurrence score. Gates using only these occurrence summaries and conditional variance therefore have the same decision
distribution under both laws, including randomized gates with a common conditional seed law. In a numerical illustration
with a common fixed candidate, direct evaluation of its loss gain
over persistence distinguishes the two laws. Both the empirical-sign and lower confidence bound (LCB)
rules recover the opposite execution decisions as the calibration sample size increases
(see Appendix~\ref{app:summary-nonidentification-numerical}).

\subsection{Independent loss-specific calibration}

The separation above motivates evaluating a proposed correction directly through its downstream loss.
We use an independent calibration split to compare the fixed executed proposal with persistence within
predeclared groups.

\paragraph{Calibration setup.}
Let $\mathcal T$ denote the $\sigma$-field generated by the training data and all choices made before
calibration, including the executed proposal rule, group map, and any loss normalization or clipping. Conditioning on
$\mathcal T$ treats these quantities as fixed. Given $\mathcal T$, let $U_i$ denote a calibration unit
with target law $P_U$.
The unit sets the granularity at which independence is assumed: it may be a transition, series block,
episode, or complete training run. Multiple observations within a unit are summarized by a declared
bounded loss $L_B(\pi,U_i)\in[0,B]$, with $B>0$, for rule $\pi$. This may be the reported downstream loss when it is
naturally bounded, or a normalized and clipped surrogate specified before calibration. Each unit also has
pre-outcome grouping information $X_i$. A fixed map $g(X_i)\in\{1,\ldots,G\}$ assigns units to $G$
predeclared groups, and $\delta\in(0,1)$ denotes the desired simultaneous failure probability.

\paragraph{Groupwise gain.}
For persistence $\pi_0$ and the fixed executed proposal $\pi_Q$, define the observed gain on unit $i$ as
\begin{equation*}
  Z_i:=L_B(\pi_0,U_i)-L_B(\pi_Q,U_i)\in[-B,B].
\end{equation*}
A positive value favors the proposal. For each group with positive target-population probability, its
population-average gain is
\begin{equation*}
  \mu_g:=\mathbb E[Z_i\mid g(X_i)=g,\mathcal T].
\end{equation*}
In the single-transition case, $U_i=(H_i,\Delta_i)$ and $X_i=H_i$, with unit losses
$L_B(\pi_0,U_i)=\ell_B(0,\Delta_i)$ and
$L_B(\pi_Q,U_i)=\ell_B(c_Q(H_i),\Delta_i)$. If $P$ is the conditional transition law induced by $P_U$ given $\mathcal T$, then
\begin{equation*}
 \mu_g=\mathbb E\!\left[V_{\ell_B}(Q_\theta,P\mid H_i)\mid
 g(H_i)=g,\mathcal T\right].
\end{equation*}
This identity shows that $\mu_g$ averages pointwise proposal benefit over the histories in group $g$. For
general units, $\mu_g$ instead denotes the expected persistence-relative
gain under the declared bounded unit loss, including any aggregation or clipping.

\paragraph{Certified selection.}
For each group, let $I_g:=\{i:g(X_i)=g\}$ and $n_g:=|I_g|$. A group is \emph{represented} when
$n_g>0$. For such a group, define $\widehat\mu_g:=(\sum_{i\in I_g}Z_i)/n_g$ and
\begin{equation*}
  \LCB_g:=\widehat\mu_g-r_g,
  \qquad r_g:=B\sqrt{\frac{2\log(G/\delta)}{n_g}}.
\end{equation*}
Here $r_g$ is a one-sided Hoeffding radius, adjusted across the $G$ groups by a union bound
\citep{hoeffding1963probability}. In a represented group, the selective rule applies $\pi_Q$ when
$\LCB_g>0$ and applies persistence $\pi_0$ otherwise. Empty groups also receive persistence. In the
single-transition case, $c_{\mathrm{sel}}(H)=c_Q(H)$ when $g(H)$ is represented and
$\LCB_{g(H)}>0$, and $c_{\mathrm{sel}}(H)=0$ otherwise.
Theorem~\ref{thm:calibration-main} establishes the resulting simultaneous guarantee.

\begin{theorem}[Simultaneous accepted-group improvement]
\label{thm:calibration-main}
Conditionally on $\mathcal T$, suppose the proposal rule, groups, and bounded unit loss are fixed,
and the calibration units are i.i.d.\ from $P_U$. With conditional probability at least $1-\delta$,
every accepted group satisfies $\mu_g>0$, giving positive expected improvement over persistence
under $L_B$ and $P_U$.
\end{theorem}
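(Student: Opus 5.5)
The plan is to condition on $\mathcal T$ and, within each group, also on the random group sizes. Then a per-group one-sided Hoeffding bound combined with a union bound over the $G$ groups gives the result. Write $\mathbf n:=(n_1,\dots,n_G)$. Because the units are i.i.d.\ from $P_U$ given $\mathcal T$ and $g$ is a fixed map of $X_i$, the following holds conditionally on $\mathcal T$ and on the assignment vector $(g(X_1),\dots,g(X_n))$: the gains $\{Z_i: i\in I_g\}$ in a represented group are i.i.d.\ with the law of $Z$ given $g(X)=g$ and $\mathcal T$. Their common mean is therefore $\mu_g$, and they take values in $[-B,B]$. I will verify this by a short factorization argument. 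The joint law of $(U_i)$ factorizes, so conditioning each unit on its own group label leaves the units independent, each with the within-group conditional law. This is the step where the i.i.d.\ and fixed-rule assumptions are used. The bounded-loss requirement enters only through $Z_i\in[-B,B]$.

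Next, fix a represented group $g$. Hoeffding's inequality for $n_g$ independent variables with range length $2B$ gives
\[
\mathbb P\bigl(\widehat\mu_g-\mu_g\ge t \,\big|\, \mathcal T,\mathbf n\bigr)\le \exp\!\bigl(-2n_g t^2/(2B)^2\bigr)=\exp\!\bigl(-n_g t^2/(2B^2)\bigr).
\]
Setting $t=r_g=B\sqrt{2\log(G/\delta)/n_g}$ makes the right-hand side equal to $\delta/G$. Hence the event $E_g:=\{\widehat\mu_g-r_g\ge\mu_g\}$ has conditional probability at most $\delta/G$. For an empty group we set $E_g=\emptyset$, since such a group is never accepted. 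The radius $r_g$ depends on the data only through $n_g$. Conditioning on $\mathbf n$ therefore makes $r_g$ deterministic, which is exactly why I condition on the assignment rather than treating $n_g$ as random inside Hoeffding.

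Then I take a union bound over $g=1,\dots,G$ to get $\mathbb P(\cup_g E_g\mid\mathcal T,\mathbf n)\le\delta$. Integrating over $\mathbf n$ yields the same bound conditionally on $\mathcal T$ alone. On the complement, every represented group satisfies $\mu_g>\LCB_g$. So any accepted group, meaning one that is represented with $\LCB_g>0$, has $\mu_g>0$. By the definition of $\mu_g$, this says the fixed executed proposal has strictly smaller expected $L_B$ loss than persistence in that group under $P_U$. In the single-transition case it equals positive group-averaged proposal benefit $V_{\ell_B}$, by the identity stated before the theorem.

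I expect the main obstacle to be the conditioning step in the first paragraph. It must be justified that the within-group samples are i.i.d.\ given the random sizes. It must also be justified that groups with zero target probability cause no trouble: such a group is almost surely empty and hence never accepted, which is why $\mu_g$ is only defined for positive-probability groups. I would handle this with the explicit factorization rather than an optional-stopping argument. The Hoeffding constant itself is routine once the range $2B$ is used.
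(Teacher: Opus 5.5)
Your proposal is correct and follows essentially the same route as the paper: condition on $\mathcal T$ and the group-label vector, apply one-sided Hoeffding with range $2B$ at $t=r_g$ to get failure probability $\delta/G$ per represented group, take a union bound over the $G$ groups, and then integrate over the labels. Switching between conditioning on the full label vector and on the counts $\mathbf n$ is harmless, since the per-group bound depends on the labels only through $n_g$.
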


\begin{proposition}[Finite-sample certification power]
\label{prop:certification-power-main}
Under the assumptions of Theorem~\ref{thm:calibration-main}, condition on $\mathcal T$ and the realized
group counts. With probability at least $1-\delta$, every represented group whose population gain satisfies
\begin{equation*}
 \mu_g>2B\sqrt{\frac{2\log(G/\delta)}{n_g}}
\end{equation*}
is accepted simultaneously.
\end{proposition}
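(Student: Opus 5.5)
Conditioning on $\mathcal T$ and on the realized group counts, I treat the proposal, the groups and the counts $n_g$ as fixed. The first task is to check that, within each represented group $g$, the gains $Z_i$ for $i\in I_g$ are i.i.d.\ with mean $\mu_g$. The units are i.i.d.\ from $P_U$ and $g(X_i)$ is a function of unit $i$ alone, so conditioning on the label vector $(g(X_1),\dots,g(X_n))$ factorizes. Given their labels, the units in $I_g$ are therefore independent, each distributed as $P_U$ conditioned on group $g$. Their gains $Z_i\in[-B,B]$ consequently have mean $\mathbb E[Z_i\mid g(X_i)=g,\mathcal T]=\mu_g$. I expect this conditioning step to be the only genuinely delicate point; the rest is a concentration argument.

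Next I apply the lower-tail Hoeffding inequality to each represented group. Each $Z_i$ lies in an interval of length $2B$, so for any $t>0$,
\[
\sP\bigl(\mu_g-\widehat\mu_g\geq t\bigr)\leq \exp\!\bigl(-2n_g t^2/(2B)^2\bigr)=\exp\!\bigl(-n_g t^2/(2B^2)\bigr).
\]
Choosing $t=r_g=B\sqrt{2\log(G/\delta)/n_g}$ makes the right-hand side equal to $\delta/G$. A union bound over at most $G$ represented groups then shows that, with probability at least $1-\delta$, the event
\[
\mathcal E:=\{\widehat\mu_g>\mu_g-r_g \text{ for every represented } g\}
\]
holds. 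This is the same good event that underlies Theorem~\ref{thm:calibration-main}, except that there it would be used in the opposite direction, namely $\mu_g\geq\widehat\mu_g-r_g=\LCB_g$.

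On $\mathcal E$, take any represented group with $\mu_g>2r_g$, which is exactly the displayed condition. Then
\[
\LCB_g=\widehat\mu_g-r_g>\mu_g-2r_g>0,
\]
so the selective rule accepts group $g$. All such groups are accepted simultaneously on the single event $\mathcal E$. The bound holds conditionally on the counts, and it therefore also holds after averaging over them.

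One detail needs care: the event must be defined with the strict inequality $\widehat\mu_g>\mu_g-r_g$ so that acceptance is strict. Hoeffding controls $\sP(\mu_g-\widehat\mu_g\geq r_g)$, so its complement gives exactly this strict form and no adjustment is needed.
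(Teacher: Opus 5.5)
Your proposal is correct and follows the paper's proof essentially step for step: you condition on the label vector, apply lower-tail Hoeffding at radius $r_g$, take a union bound over the $G$ groups, and conclude $\LCB_g\geq\mu_g-2r_g>0$ on the resulting event. Two small remarks: the strict event is not actually needed, because the paper's non-strict event $\widehat\mu_g\geq\mu_g-r_g$ already gives $\LCB_g\geq\mu_g-2r_g>0$ from the strict hypothesis $\mu_g>2r_g$; and the lower-tail event is a different event from the upper-tail one behind Theorem~\ref{thm:calibration-main}, not the same event read in the other direction, so having both guarantees hold simultaneously would require a union bound over $2G$ tails.
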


\noindent\emph{Proofs of Theorem~\ref{thm:calibration-main} and Proposition~\ref{prop:certification-power-main}
are provided in Appendix~\ref{app:calibration-guarantees}.}

The sufficient per-group sample size scales as
$B^2\log(G/\delta)/\mu_g^2$, making explicit the dependence of simultaneous certification on the loss
bound, number of groups, and population gain.

\begin{samepage}
\section{Experiments}

The experiments address three questions:
\begin{list}{}{%
  \setlength{\leftmargin}{2em}%
  \setlength{\labelwidth}{1.6em}%
  \setlength{\labelsep}{0.4em}%
  \setlength{\itemsep}{3pt}%
  \setlength{\parsep}{0pt}%
}
  \item[Q1:] Can calibration recover a known loss-dependent movement boundary?
  \item[Q2:] How does proposal benefit vary across forecasting and dynamics settings?
  \item[Q3:] How does confidence correction affect update coverage and prediction loss?
\end{list}

Table~\ref{tab:evidence} summarizes each experiment's question, selection outcome, and evaluation result.
\end{samepage}

\begin{table}[H]
\centering
\small
\setlength{\tabcolsep}{9pt}
\renewcommand{\arraystretch}{1.08}
\caption{Experimental overview. Empirical losses are evaluated on held-out data. Lower loss and regret are better.}
\label{tab:evidence}
\begin{tabularx}{\textwidth}{@{}
  >{\raggedright\arraybackslash}p{0.17\textwidth}
  >{\raggedright\arraybackslash}p{0.27\textwidth}
  >{\raggedright\arraybackslash}X@{}}
\toprule
Scene & What it tests & Selection and held-out result \\
\midrule
Controlled synthetic changes & Q1, Q3: Movement boundary and finite-sample selection & $n_g: 50\!\to\!1000$. Power: $0.452\!\to\!0.888$. Regret: $0.0566\!\to\!0.0051$ \\
\addlinespace[1.5pt]
Monash Car Parts & Q2: Broad proposal benefit & Accept $2/2$ groups. Persistence / selective MAE: $0.573/0.391$ \\
\addlinespace[1.5pt]
M4 Monthly & Q2, Q3: Heterogeneous benefit and selective execution & Accept $1/3$ groups. Persistence / always execute / selective: $0.599/0.621/0.588$ \\
\addlinespace[1.5pt]
Minari FourRooms & Q2, Q3: Proposal benefit and selective execution in discrete-action dynamics & Accept $1/2$ groups. Persistence / always execute / selective: $0.185/0.094/0.122$ \\
\addlinespace[1.5pt]
Minari MuJoCo & Q2, Q3: Proposal benefit and selective execution across prediction horizons & At $h=20$, persistence / tuned uncertainty / selective NMSE: $1.919/0.552/0.552$. Selective is no worse on $8/9$ variants \\
\addlinespace[1.5pt]
Six-type unhealthy inventory & Q2, Q3: Fitted-proposal gain and held-out comparison & Candidate does not beat persistence at any $h$ \\
\bottomrule
\end{tabularx}
\end{table}
Throughout, $h$ denotes the forecast or rollout horizon. Update coverage denotes the frequency of
selecting the proposal. Proposal construction, gate calibration, and evaluation use disjoint data.
Proposals range from empirical and seasonal rules to learned ensembles and action-conditioned models.
Losses are compared within each setting. Appendix~\ref{app:assumption-audit} specifies calibration--deployment assumptions,
and Appendix~\ref{app:metric-definitions} defines the losses.

\subsection{Controlled phase transition}

A signed binary-change population varies
zero-change mass through the absolute-loss boundary at $1/2$.
Across 500 repetitions, increasing calibration units per group from 50 to 1,000 raises beneficial-group
power from 0.452 to 0.888 and lowers regret from 0.0566 to 0.0051.
Figure~\ref{fig:phase} shows the population boundary and its finite-sample recovery.
The fixed groups contain both beneficial and harmful proposals. We compare the LCB gate with an
empirical-sign rule that accepts any group with positive calibration mean gain. With 200 calibration
observations per group, exact binomial probabilities show that confidence correction reduces the probability of accepting
any harmful group from 6.96\% to 0.00010\%, while the acceptance rate for beneficial groups decreases
from 98.2\% to 66.6\%. Table~\ref{tab:phase-confidence} reports the resulting trade-off between
error control, update coverage, and population regret. A single transition law further illustrates loss
dependence: its Bayes correction is $0.4$ under squared loss, $0$ under absolute loss, and respectively
$-1$ and $3$ under pinball loss at $\tau=0.2$ and $0.8$. Table~\ref{tab:loss-conditioned-audit} reports
its exact risks.

\begin{figure}[!htbp]
  \centering
  \includegraphics[width=\linewidth]{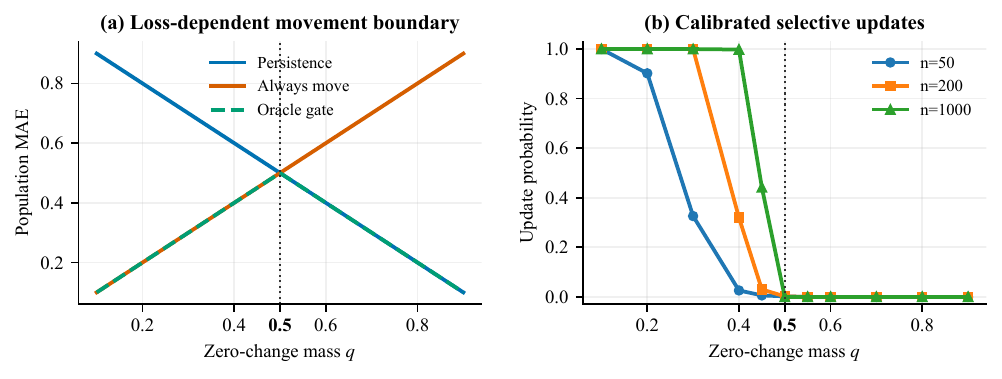}
  \caption{Loss-dependent movement phase transition. (a) Population MAE of persistence, always move, and the
  oracle gate. (b) Update probabilities for $n$ calibration observations per group. Dotted lines mark
  the population boundary at $q=0.5$.}
  \label{fig:phase}
\end{figure}

\subsection{Supported movement on intermittent demand}

On the public Monash Car Parts benchmark, comprising 2,674 monthly intermittent-demand
series \citep{godahewa2021monash}, a 27-month training block constructs groupwise empirical proposals,
followed by 12-month calibration and held-out test blocks. Calibration and testing use rolling
one-step forecasts with the latest observed history. A training zero-fraction threshold of $0.75$
defines dense and sparse groups. Both fitted absolute-loss Bayes actions predict zero demand.

Both groups are accepted, so selective execution equals always execute. Test MAE is $0.391$,
versus $0.573/0.626/0.539$ for persistence, seasonal naive, and a 12-month trailing mean, respectively.
Selective-minus-persistence is $-0.182$, with a 95\% bootstrap interval of $[-0.194,-0.170]$. MASE with lag-1 scaling
is $1.348$ versus $1.881$ over 2,504 eligible series. Finer post-hoc partitions
($K_{\mathrm{grp}}\in\{4,8\}$) refit the candidates and yield mixed test gains with zero gate coverage
(see Appendix~\ref{app:monash-sensitivity}).

\subsection{Selective execution under heterogeneous gain}

The M4 Monthly experiment
tests heterogeneous proposal gain under a protocol fixed before access to the official test outcomes
\citep{makridakis2018m4}. The 48,000 complete series are hash-split into 19,316 calibration and 28,684
held-out test series. The proposal repeats the latest 12-month cycle, and a training-only ratio of proposal to persistence
MAE defines three pre-specified groups. The simultaneous LCB accepts only the candidate-favored group,
covering 14.0\% of test series.

For absolute horizon error $e_j$ and training-history mean absolute first-difference scale $s$ floored at $10^{-8}$, the primary
series loss is $(\sum_{j=1}^{18}e_j/(e_j+s))/18\in[0,1]$. Always executing is worse than persistence
($0.621$ versus $0.599$), whereas selective execution reaches $0.588$. Selective-minus-persistence is
$-0.0114$ with paired 95\% bootstrap interval $[-0.0122,-0.0107]$. Selective-minus-always-execute is
$-0.0331$ with interval $[-0.0345,-0.0318]$.

The empirical-sign rule also accepts the ambiguous group, whose calibration mean gain is positive. Confidence correction restricts execution to the candidate-favored group, reducing coverage from
22.2\% to 14.0\% and certifying its gain under the stated sampling assumptions. Test loss increases
from $0.586$ to $0.588$, with a paired LCB-minus-sign difference of $0.00187$ and a 95\% bootstrap
interval of $[0.00146,0.00229]$ (see Table~\ref{tab:decision-ablation}).

The two extreme policies and the certified selector retain the same ordering under standard metrics:
persistence / always execute / selective obtain MASE with lag-1 scaling $3.380/4.259/3.301$ and symmetric mean absolute
percentage error (sMAPE) $0.153/0.160/0.147$. Appendix~\ref{app:certification-power} compares the observed group
gains and Hoeffding radii with the sufficient acceptance threshold in Proposition~\ref{prop:certification-power-main}.
Complementary Quarterly and Daily
evaluations respectively show insufficient Hoeffding certification power and nonpositive calibration gain
(see Appendix~\ref{app:cross-frequency-diagnostics}).

\subsection{Certified selection in discrete-action dynamics}

The Minari FourRooms
artifact contains 590 episodes and 10,010 transitions (\citeauthor{farama_fourrooms}). Ordered episodes are split
354/118/118 for fitting, calibration, and held-out testing. A train-only empirical table predicts the next
local image and direction from $(\text{image},\text{direction},\text{action})$. Unseen contexts persist. The
gate uses fixed turn/forward groups and a bounded loss equally weighting pixel mismatch and direction error.
Calibration and groupwise test evaluation average this loss within each episode and then equally across
episodes containing the group.

The simultaneous gain LCB is $0.029$ for the turn group and $-0.218$ for the forward group. Only the turn group is accepted, although both groups have positive calibration mean gains. On 117 held-out episodes containing turns, the accepted group's mean gain over
persistence is $0.2964$, with a paired 95\% bootstrap interval of $[0.2625,0.3292]$
(see Table~\ref{tab:fourrooms-group-gain}).

Across all 2,056 held-out transitions, transition-weighted action-conditioned / action-agnostic candidate losses are $0.094/0.125$, and
selective / persistence losses are $0.122/0.185$. The forward group also benefits on test, so selective execution has higher loss than always executing.
Selective-minus-persistence loss is $-0.0630$, with a paired episode-cluster 95\% interval of
$[-0.0692,-0.0568]$. Selective-minus-always-execute loss is $0.0277$, with an interval of
$[0.0249,0.0304]$. A closed-loop pilot records $2/30$ successful episodes for both selective and
persistence (see Appendix~\ref{app:fourrooms-closed-loop}).

\FloatBarrier
\subsection{Selective execution in neural multi-step dynamics}

The
neural dynamics study uses the Minari Hopper, HalfCheetah, and Walker2d dataset families
from the \citeauthor{farama_mujoco}. Across 27 protocol runs (three environments $\times$ three data levels $\times$ three seeds) and
$h\in\{1,5,10,20\}$, a three-member delta-MLP ensemble first generates an $h$-step open-loop prediction
under the recorded action sequence. The gate then selects between its terminal state prediction and the
window's initial state. Groups are horizon-specific training-only uncertainty tertiles. Calibration uses
window NMSE clipped to $[0,1]$, averaged within each episode--group pair and then equally across represented
episodes. Prediction accuracy is measured by unclipped NMSE, the scale-normalized coordinatewise squared
error averaged over windows within episodes, then over episodes, seeds, and dataset variants.

\begin{figure}[!htb]
  \centering
  \includegraphics[width=\linewidth]{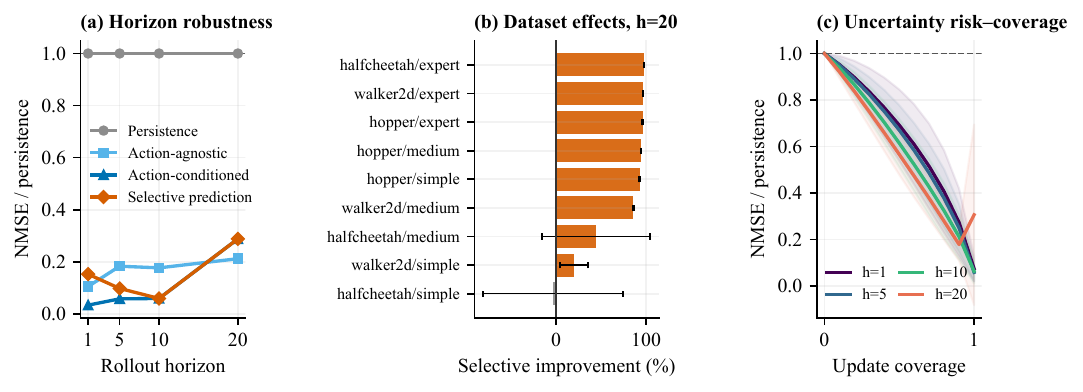}
  \caption{MuJoCo results. (a) NMSE relative to persistence across horizons. (b) Selective NMSE reduction (\%) relative to
  persistence at $h=20$, with means and sample standard deviations across seeds. (c) Relative NMSE
  versus uncertainty-based update coverage, with shaded bands showing one standard deviation across datasets.}
  \label{fig:mujoco}
\end{figure}

Selective prediction has lower macro NMSE than persistence at every horizon and is no worse on eight of nine
variants at horizon 20. For $h=1,5,10,20$, persistence-to-selective NMSE is respectively
$0.481\!\to\!0.073$, $0.793\!\to\!0.077$, $1.401\!\to\!0.082$, and $1.919\!\to\!0.552$, with coverage
$33.3/86.2/100/100\%$. A calibration-tuned uncertainty baseline accepts all groups and scores
$0.016/0.046/0.082/0.552$, improving on selective execution at $h=1,5$
(see Figure~\ref{fig:mujoco}).

Under the gate's clipped loss, all 324 run-specific groups have positive test gains in 27 repeated runs
with the same configuration and splits. Their calibration mean gains are also positive.
The LCB gate accepts 259 groups, including seven whose \emph{unclipped} test gains are negative
(see Table~\ref{tab:mujoco-bounded-gains}). It retains persistence in the remaining 65 groups, reducing
update coverage at shorter horizons, while the empirical-sign rule executes every proposal.
Under clipped loss, LCB-minus-sign differences are $0.0512$ and $0.0295$ at $h=1,5$, respectively
(see Table~\ref{tab:decision-ablation}).

\FloatBarrier
\subsection{Constrained forecasting of six unhealthy-inventory types}

We study 33 stock keeping units (SKUs) from JD.com in a supply-chain setting with six unhealthy-inventory types: shelf-life-risk,
aged, non-moving, slow-moving, off-shelf residual, and operationally non-saleable inventory. Observations span 56 days at the national SKU--day grain.
These quantities may overlap, and each is separately bounded by active inventory.
Appendix~\ref{app:inventory-theory} describes the overlapping state geometry and feasibility constraints.
For this research forecasting task, a shared probabilistic ensemble supplies a fixed, type-aware
state-update proposal, with feasibility enforced before calibration and evaluation.

\begin{figure}[!htb]
  \centering
  \includegraphics[width=\linewidth]{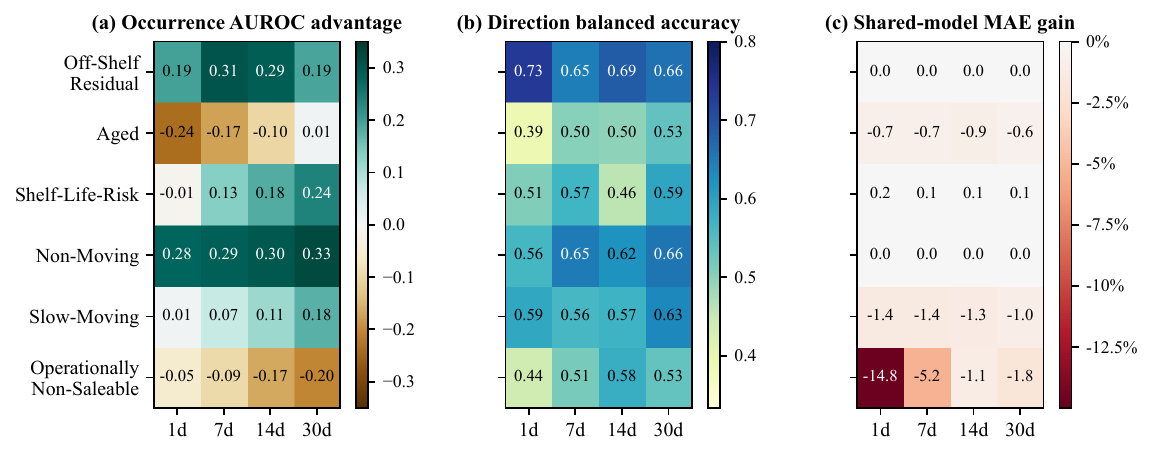}
  \caption{Diagnostics across six unhealthy-inventory types. (a) Occurrence-AUROC advantage over the strongest
  naive baseline. (b) Direction balanced accuracy. (c) Shared-model MAE gain (\%) relative to persistence.}
  \label{fig:inventory-map}
\end{figure}

The shared ensemble has higher MAE than persistence at each of the four horizons. At
$h\in\{1,7,14,30\}$, shared-model MAE is $1.030$, $3.967$, $6.850$, and $13.759$, compared with $1.027$,
$3.958$, $6.837$, and $13.737$ for persistence. This ordering is unchanged under inventory-normalized MAE
and weighted absolute percentage error (WAPE), across three training seeds, and in a later-window historical
backtest (see Appendix~\ref{app:inventory-robustness}). Yet occurrence probes attain macro AUROC $0.819$,
$0.792$, $0.791$, and $0.794$, whereas no type--horizon signed-delta probe achieves lower MAE than
zero-change persistence in at least four of the five entity-held-out folds. Thus, in this cohort, occurrence-ranking
signal coexists with a shared state-update proposal that does not improve MAE
(see Figure~\ref{fig:inventory-map}).

\FloatBarrier

\section{Conclusion}

A predictive transition is not yet a decision to change state. We formulate state execution as a
loss-conditioned choice between a fixed feasible proposal and persistence, with independent calibration certifying
when movement is supported. Across controlled, forecasting, and action-conditioned settings, the experiments
distinguish supported movement from predictive signals that do not justify a state update under the declared loss.
The result is a model-agnostic execution principle that applies across empirical, seasonal, action-conditioned,
and neural proposals.
Future work will study recursive rollouts that feed selected states into subsequent predictions,
with online recalibration and policy-aware proposal construction.

\clearpage
\setlength{\bibsep}{2pt plus 0.3ex}
\bibliography{references}

\begin{thebibliography}{31}
\providecommand{\natexlab}[1]{#1}
\providecommand{\url}[1]{\texttt{#1}}
\expandafter\ifx\csname urlstyle\endcsname\relax
  \providecommand{\doi}[1]{doi: #1}\else
  \providecommand{\doi}{doi: \begingroup \urlstyle{rm}\Url}\fi

\bibitem[Angelopoulos et~al.(2025)Angelopoulos, Bates, Cand\`es, Jordan, and
  Lei]{angelopoulos2025learntest}
Anastasios~N. Angelopoulos, Stephen Bates, Emmanuel~J. Cand\`es, Michael~I.
  Jordan, and Lihua Lei.
\newblock Learn then test: Calibrating predictive algorithms to achieve risk
  control.
\newblock \emph{The Annals of Applied Statistics}, 19\penalty0 (2):\penalty0
  1641--1662, 2025.
\newblock ISSN 1932-6157.
\newblock \doi{10.1214/24-AOAS1998}.
\newblock URL \url{https://doi.org/10.1214/24-AOAS1998}.

\bibitem[Bertsimas \& Kallus(2020)Bertsimas and
  Kallus]{bertsimas2020prescriptive}
Dimitris Bertsimas and Nathan Kallus.
\newblock From predictive to prescriptive analytics.
\newblock \emph{Management Science}, 66\penalty0 (3):\penalty0 1025--1044,
  2020.
\newblock ISSN 1526-5501.
\newblock \doi{10.1287/mnsc.2018.3253}.
\newblock URL \url{https://doi.org/10.1287/mnsc.2018.3253}.

\bibitem[Chen et~al.(2022)Chen, Sun, Li, Wang, Hao, and
  Sifakis]{chen2022simplexdrive}
Shengduo Chen, Yaowei Sun, Dachuan Li, Qiang Wang, Qi~Hao, and Joseph Sifakis.
\newblock Runtime safety assurance for learning-enabled control of autonomous
  driving vehicles.
\newblock In \emph{2022 International Conference on Robotics and Automation
  ({ICRA})}, pp.\  8978--8984. IEEE, 2022.
\newblock \doi{10.1109/ICRA46639.2022.9812177}.
\newblock URL \url{https://ieeexplore.ieee.org/document/9812177}.

\bibitem[Chua et~al.(2018)Chua, Calandra, McAllister, and Levine]{chua2018pets}
Kurtland Chua, Roberto Calandra, Rowan McAllister, and Sergey Levine.
\newblock Deep reinforcement learning in a handful of trials using
  probabilistic dynamics models.
\newblock In S.~Bengio, H.~Wallach, H.~Larochelle, K.~Grauman, N.~Cesa-Bianchi,
  and R.~Garnett (eds.), \emph{Advances in Neural Information Processing
  Systems}, volume~31. Curran Associates, Inc., 2018.
\newblock URL
  \url{https://proceedings.neurips.cc/paper/2018/hash/3de568f8597b94bda53149c7d7f5958c-Abstract.html}.

\bibitem[El-Yaniv \& Wiener(2010)El-Yaniv and Wiener]{elyaniv2010selective}
Ran El-Yaniv and Yair Wiener.
\newblock On the foundations of noise-free selective classification.
\newblock \emph{Journal of Machine Learning Research}, 11\penalty0
  (53):\penalty0 1605--1641, 2010.
\newblock URL \url{https://jmlr.org/papers/v11/el-yaniv10a.html}.

\bibitem[{Farama Foundation}({\natexlab{a}})]{farama_fourrooms}
{Farama Foundation}.
\newblock Fourrooms: Minari dataset documentation.
\newblock \url{https://minari.farama.org/main/datasets/minigrid/fourrooms/}.

\bibitem[{Farama Foundation}({\natexlab{b}})]{farama_mujoco}
{Farama Foundation}.
\newblock {MuJoCo}: Minari dataset documentation.
\newblock \url{https://minari.farama.org/main/datasets/mujoco/}.

\bibitem[Frauenknecht et~al.(2024)Frauenknecht, Eisele, Subhasish, Solowjow,
  and Trimpe]{frauenknecht2024macura}
Bernd Frauenknecht, Artur Eisele, Devdutt Subhasish, Friedrich Solowjow, and
  Sebastian Trimpe.
\newblock Trust the model where it trusts itself - model-based actor-critic
  with uncertainty-aware rollout adaption.
\newblock In \emph{Proceedings of the 41st International Conference on Machine
  Learning}, volume 235 of \emph{Proceedings of Machine Learning Research},
  pp.\  13973--14005. PMLR, 2024.
\newblock URL \url{https://proceedings.mlr.press/v235/frauenknecht24a.html}.

\bibitem[Geifman \& El-Yaniv(2019)Geifman and
  El-Yaniv]{geifman2019selectivenet}
Yonatan Geifman and Ran El-Yaniv.
\newblock {SelectiveNet}: A deep neural network with an integrated reject
  option.
\newblock In \emph{Proceedings of the 36th International Conference on Machine
  Learning}, volume~97 of \emph{Proceedings of Machine Learning Research}, pp.\
   2151--2159. PMLR, 2019.
\newblock URL \url{https://proceedings.mlr.press/v97/geifman19a.html}.

\bibitem[Giacomini \& White(2006)Giacomini and White]{giacomini2006conditional}
Raffaella Giacomini and Halbert White.
\newblock Tests of conditional predictive ability.
\newblock \emph{Econometrica}, 74\penalty0 (6):\penalty0 1545--1578, 2006.
\newblock \doi{10.1111/j.1468-0262.2006.00718.x}.
\newblock URL \url{https://doi.org/10.1111/j.1468-0262.2006.00718.x}.

\bibitem[Gneiting(2011)]{gneiting2011point}
Tilmann Gneiting.
\newblock Making and evaluating point forecasts.
\newblock \emph{Journal of the American Statistical Association}, 106\penalty0
  (494):\penalty0 746--762, 2011.
\newblock \doi{10.1198/jasa.2011.r10138}.
\newblock URL \url{https://doi.org/10.1198/jasa.2011.r10138}.

\bibitem[Gneiting \& Katzfuss(2014)Gneiting and
  Katzfuss]{gneiting2014forecasting}
Tilmann Gneiting and Matthias Katzfuss.
\newblock Probabilistic forecasting.
\newblock \emph{Annual Review of Statistics and Its Application}, 1\penalty0
  (1):\penalty0 125--151, 2014.
\newblock ISSN 2326-831X.
\newblock \doi{10.1146/annurev-statistics-062713-085831}.
\newblock URL \url{https://doi.org/10.1146/annurev-statistics-062713-085831}.

\bibitem[Gneiting \& Raftery(2007)Gneiting and Raftery]{gneiting2007scoring}
Tilmann Gneiting and Adrian~E. Raftery.
\newblock Strictly proper scoring rules, prediction, and estimation.
\newblock \emph{Journal of the American Statistical Association}, 102\penalty0
  (477):\penalty0 359--378, 2007.
\newblock ISSN 1537-274X.
\newblock \doi{10.1198/016214506000001437}.
\newblock URL \url{https://doi.org/10.1198/016214506000001437}.

\bibitem[Godahewa et~al.(2021)Godahewa, Bergmeir, Webb, Hyndman, and
  Montero-Manso]{godahewa2021monash}
Rakshitha~W Godahewa, Christoph Bergmeir, Geoffrey Webb, Rob Hyndman, and Pablo
  Montero-Manso.
\newblock Monash time series forecasting archive.
\newblock In J.~Vanschoren and S.~Yeung (eds.), \emph{Proceedings of the Neural
  Information Processing Systems Track on Datasets and Benchmarks}, volume~1,
  2021.
\newblock URL
  \url{https://datasets-benchmarks-proceedings.neurips.cc/paper_files/paper/2021/hash/eddea82ad2755b24c4e168c5fc2ebd40-Abstract-round2.html}.

\bibitem[Ha \& Schmidhuber(2018)Ha and Schmidhuber]{ha2018worldmodels}
David Ha and J\"{u}rgen Schmidhuber.
\newblock Recurrent world models facilitate policy evolution.
\newblock In S.~Bengio, H.~Wallach, H.~Larochelle, K.~Grauman, N.~Cesa-Bianchi,
  and R.~Garnett (eds.), \emph{Advances in Neural Information Processing
  Systems}, volume~31. Curran Associates, Inc., 2018.
\newblock URL
  \url{https://proceedings.neurips.cc/paper/2018/hash/2de5d16682c3c35007e4e92982f1a2ba-Abstract.html}.

\bibitem[Hand(2009)]{hand2009auc}
David~J. Hand.
\newblock Measuring classifier performance: A coherent alternative to the area
  under the {ROC} curve.
\newblock \emph{Machine Learning}, 77\penalty0 (1):\penalty0 103--123, 2009.
\newblock \doi{10.1007/s10994-009-5119-5}.
\newblock URL \url{https://doi.org/10.1007/s10994-009-5119-5}.

\bibitem[Hoeffding(1963)]{hoeffding1963probability}
Wassily Hoeffding.
\newblock Probability inequalities for sums of bounded random variables.
\newblock \emph{Journal of the American Statistical Association}, 58\penalty0
  (301):\penalty0 13--30, 1963.
\newblock \doi{10.1080/01621459.1963.10500830}.
\newblock URL \url{https://doi.org/10.1080/01621459.1963.10500830}.

\bibitem[Huang et~al.(2026)Huang, Ma, and Michailidis]{huang2026certificate}
Chenfeng Huang, Zixuan Ma, and George Michailidis.
\newblock Model-agnostic online certificate-driven calibration for time series
  forecasting under distribution shift.
\newblock In Emilija Perkovi\'c and Daniel Malinsky (eds.), \emph{Proceedings
  of the 42nd Conference on Uncertainty in Artificial Intelligence}, volume 337
  of \emph{Proceedings of Machine Learning Research}, pp.\  2244--2273. PMLR,
  2026.
\newblock URL \url{https://proceedings.mlr.press/v337/huang26b.html}.

\bibitem[In\'{a}cio et~al.(2026)In\'{a}cio, Cerqueira, Barandas, and
  Soares]{inacio2026selective}
Ricardo In\'{a}cio, Vitor Cerqueira, Mar\'{i}lia Barandas, and Carlos Soares.
\newblock Selective time series forecasting via metalearning.
\newblock arXiv preprint arXiv:2606.23448, 2026.
\newblock URL \url{https://arxiv.org/abs/2606.23448}.

\bibitem[Janner et~al.(2019)Janner, Fu, Zhang, and Levine]{janner2019mbpo}
Michael Janner, Justin Fu, Marvin Zhang, and Sergey Levine.
\newblock When to trust your model: Model-based policy optimization.
\newblock In H.~Wallach, H.~Larochelle, A.~Beygelzimer, Florence
  {d'Alch\'{e}-Buc}, E.~Fox, and R.~Garnett (eds.), \emph{Advances in Neural
  Information Processing Systems}, volume~32. Curran Associates, Inc., 2019.
\newblock URL
  \url{https://proceedings.neurips.cc/paper/2019/hash/5faf461eff3099671ad63c6f3f094f7f-Abstract.html}.

\bibitem[Joshi et~al.(2026)Joshi, Wang, Hassani, and
  Dobriban]{joshi2026riskcontrolled}
Sunay Joshi, Tao Wang, Hamed Hassani, and Edgar Dobriban.
\newblock Risk-controlled post-processing of decision policies.
\newblock arXiv preprint arXiv:2605.06479, 2026.
\newblock URL \url{https://arxiv.org/abs/2605.06479}.

\bibitem[Kidambi et~al.(2020)Kidambi, Rajeswaran, Netrapalli, and
  Joachims]{kidambi2020morel}
Rahul Kidambi, Aravind Rajeswaran, Praneeth Netrapalli, and Thorsten Joachims.
\newblock {MOReL}: Model-based offline reinforcement learning.
\newblock In H.~Larochelle, M.~Ranzato, R.~Hadsell, M.~F. Balcan, and H.~Lin
  (eds.), \emph{Advances in Neural Information Processing Systems}, volume~33,
  pp.\  21810--21823. Curran Associates, Inc., 2020.
\newblock URL
  \url{https://proceedings.neurips.cc/paper/2020/hash/f7efa4f864ae9b88d43527f4b14f750f-Abstract.html}.

\bibitem[Laroche et~al.(2019)Laroche, Trichelair, and Combes]{laroche2019spibb}
Romain Laroche, Paul Trichelair, and Remi Tachet~Des Combes.
\newblock Safe policy improvement with baseline bootstrapping.
\newblock In \emph{Proceedings of the 36th International Conference on Machine
  Learning}, volume~97 of \emph{Proceedings of Machine Learning Research}, pp.\
   3652--3661. PMLR, 2019.
\newblock URL \url{https://proceedings.mlr.press/v97/laroche19a.html}.

\bibitem[Lu et~al.(2026)Lu, Yao, He, Han, Liu, Liao, He, and
  Peng]{lu2026riseadaptive}
Hongbo Lu, Liang Yao, Chenghao He, Hao Han, Fan Liu, Wenlong Liao, Tao He, and
  Pai Peng.
\newblock {RISE}: Adaptive imagination for world action models.
\newblock arXiv preprint arXiv:2608.20430, 2026.
\newblock URL \url{https://arxiv.org/abs/2608.20430}.

\bibitem[Makridakis et~al.(2018)Makridakis, Spiliotis, and
  Assimakopoulos]{makridakis2018m4}
Spyros Makridakis, Evangelos Spiliotis, and Vassilios Assimakopoulos.
\newblock The {M4} competition: Results, findings, conclusion and way forward.
\newblock \emph{International Journal of Forecasting}, 34\penalty0
  (4):\penalty0 802--808, 2018.
\newblock \doi{10.1016/j.ijforecast.2018.06.001}.
\newblock URL \url{https://doi.org/10.1016/j.ijforecast.2018.06.001}.

\bibitem[Maurer \& Pontil(2009)Maurer and Pontil]{maurer2009empirical}
Andreas Maurer and Massimiliano Pontil.
\newblock Empirical bernstein bounds and sample-variance penalization.
\newblock In \emph{Proceedings of the 22nd Annual Conference on Learning
  Theory}, 2009.
\newblock URL \url{https://www.cs.mcgill.ca/~colt2009/papers/012.pdf}.

\bibitem[Mozannar \& Sontag(2020)Mozannar and Sontag]{mozannar2020defer}
Hussein Mozannar and David Sontag.
\newblock Consistent estimators for learning to defer to an expert.
\newblock In \emph{Proceedings of the 37th International Conference on Machine
  Learning}, volume 119 of \emph{Proceedings of Machine Learning Research},
  pp.\  7076--7087. PMLR, 2020.
\newblock URL \url{https://proceedings.mlr.press/v119/mozannar20b.html}.

\bibitem[Noskov et~al.(2024)Noskov, Fishkov, and Panov]{noskov2024selective}
Fedor Noskov, Alexander Fishkov, and Maxim Panov.
\newblock Selective nonparametric regression via testing.
\newblock In \emph{Proceedings of the 15th Asian Conference on Machine
  Learning}, volume 222 of \emph{Proceedings of Machine Learning Research},
  pp.\  1023--1038. PMLR, 2024.
\newblock URL \url{https://proceedings.mlr.press/v222/noskov24a.html}.

\bibitem[Tomar et~al.(2026)Tomar, Tirupathi, Daly, and
  Dusparic]{tomar2026shapelets}
Shivani Tomar, Seshu Tirupathi, Elizabeth Daly, and Ivana Dusparic.
\newblock Shapelets-enriched selective forecasting using time series foundation
  models.
\newblock arXiv preprint arXiv:2601.11821, 2026.
\newblock URL \url{https://arxiv.org/abs/2601.11821}.

\bibitem[Wang et~al.(2026)Wang, Zhang, Lin, Luo, Wang, Wang, and
  Qi]{wang2026trustimagination}
Rui Wang, Yue Zhang, Jiehong Lin, Kuncheng Luo, Jianan Wang, Zhongrui Wang, and
  Xiaojuan Qi.
\newblock When to trust imagination: Adaptive action execution for world action
  models.
\newblock arXiv preprint arXiv:2605.06222, 2026.
\newblock URL \url{https://arxiv.org/abs/2605.06222}.

\bibitem[Yu et~al.(2020)Yu, Thomas, Yu, Ermon, Zou, Levine, Finn, and
  Ma]{yu2020mopo}
Tianhe Yu, Garrett Thomas, Lantao Yu, Stefano Ermon, James Zou, Sergey Levine,
  Chelsea Finn, and Tengyu Ma.
\newblock {MOPO}: Model-based offline policy optimization.
\newblock In H.~Larochelle, M.~Ranzato, R.~Hadsell, M.~F. Balcan, and H.~Lin
  (eds.), \emph{Advances in Neural Information Processing Systems}, volume~33,
  pp.\  14129--14142. Curran Associates, Inc., 2020.
\newblock URL
  \url{https://proceedings.neurips.cc/paper/2020/hash/a322852ce0df73e204b7e67cbbef0d0a-Abstract.html}.

\end{thebibliography}
\bibliographystyle{iclr2027_conference}

\clearpage
\appendix

\section{Methodological Foundations}
\label{app:theory}

We position the execution interface among related methods, then establish loss-conditioned decisions,
proposal-quality bounds, and calibration guarantees.

\subsection{Comparison with Related Methods}
\label{app:method-comparison}

Related methods select predictions, policies, or rollout lengths using different evaluation targets
(see Table~\ref{tab:method-comparison}). Our interface selects between a feasible state correction and
persistence using the correction's group-average loss reduction. Evaluating the proposal after the
feasibility map aligns the calibration target with the state update actually executed.

The statistical construction follows the learn-then-test principle of calibrating fixed candidates through
multiple testing \citep{angelopoulos2025learntest}: each group tests the null $\mu_g\leq0$, and simultaneous
error control certifies the accepted groups. Hoeffding bounds instantiate this construction for bounded
loss gains under conditional i.i.d.\ sampling (see Appendix~\ref{app:assumption-audit}). Other valid simultaneous
bounds can be used with the same feasible proposal, loss, and persistence baseline.

\begin{table}[ht]
\centering
\small
\setlength{\tabcolsep}{3.2pt}
\caption{Decision objects and evaluation targets of representative selection methods.}
\label{tab:method-comparison}
\begin{tabularx}{\textwidth}{@{}
>{\raggedright\arraybackslash}p{0.30\textwidth}
>{\raggedright\arraybackslash}p{0.18\textwidth}
>{\raggedright\arraybackslash}p{0.20\textwidth}
>{\raggedright\arraybackslash}X@{}}
\toprule
Method & Decision object & Retention or fallback & Evaluation target \\
\midrule
Selective prediction \citep{elyaniv2010selective} & Predict or abstain & Abstention & Risk on accepted predictions and coverage \\
Conditional forecast selection \citep{giacomini2006conditional} & Forecast rule & Alternative forecast & Conditional expected loss difference \\
Learn-then-test \citep{angelopoulos2025learntest} & Candidate configuration & Application-defined & User-specified risk constraints \\
Online forecast calibration \citep{huang2026certificate} & Gated residual correction & Backbone forecast & Target-risk certificate under temporal dependence and shift \\
Baseline bootstrapping \citep{laroche2019spibb} & Policy improvement & Baseline policy in uncertain regions & Return relative to the baseline \\
Risk-controlled post-processing \citep{joshi2026riskcontrolled} & Policy modification & Retain baseline where possible & Baseline agreement subject to a chance-risk constraint \\
Adaptive imagination \citep{lu2026riseadaptive} & Imagination depth & Stop imagination & Expected planning benefit versus computation cost \\
Adaptive action execution \citep{wang2026trustimagination} & Action-chunk length & Replan & Prediction--observation consistency \\
Loss-conditioned execution (ours) & Feasible state correction & Persist current state & Positive group-average bounded-loss gain over persistence \\
\bottomrule
\end{tabularx}
\end{table}
\FloatBarrier

\subsection{Loss-Conditioned Decisions}
\label{app:loss-conditioned-decisions}

Let $H$ be a forecast context, including an action when one is observed, and let
$\Delta\in\mathbb R^d$ be the future state correction, where $d$ is the state dimension. A fitted transition model supplies
the model-induced conditional predictive law $Q(\Delta\mid H)$ for this correction. Let $S$ be the current feasible state,
$\mathcal A(H)$ a raw correction domain,
$\mathcal D(H)$ the set of feasible corrections containing zero, and $F_H$ a fixed map into
$S+\mathcal D(H)$. Use measurable rules and finite conditional risks and risk infima, and assume the
Bayes minimum below is attained. For a loss $\ell(c,\Delta)$, define
\begin{align*}
 R_Q(c\mid H)&:=\mathbb E_Q[\ell(c,\Delta)\mid H],\\
 b_Q(H)&\in\arg\min_{c\in\mathcal A(H)}R_Q(c\mid H),\\
 c_Q(H)&:=F_H(S+b_Q(H))-S,\\
 R_P(c\mid H)&:=\mathbb E_P[\ell(c,\Delta)\mid H],\\
 M_\ell(P\mid H)&:=R_P(0\mid H)-\inf_{c\in\mathcal D(H)}R_P(c\mid H),\\
 V_\ell(Q,P\mid H)&:=R_P(0\mid H)-R_P(c_Q(H)\mid H).
\end{align*}
We suppress the parameter subscript $\theta$ in this appendix, so $Q$ denotes the fitted predictive law
$Q_\theta$ used in the main text. The quantity $c_Q(H)$ is the feasible correction actually executed from the
model's proposal. All calibration losses are evaluated after applying $F_H$. Direct constrained Bayes
optimization is recovered by setting $\mathcal A(H)=\mathcal D(H)$ and taking $F_H$ to be the identity on feasible states.
Persistence corresponds to the feasible correction $c_0(H):=0$. In this notation, distributional informativeness
is a property of $Q$, population state movability is captured by $M_\ell>0$, and proposal benefit is measured
by $V_\ell$. Positive $V_\ell$ gives population support for the proposal. Calibration assesses evidence for
positive group-average gain under the declared bounded unit loss.

\begin{proof}[Proof of Proposition~\ref{prop:common-losses-main} and
Corollary~\ref{cor:absolute-main}]
For squared loss,
\[
  \mathbb E[(\Delta-c)^2\mid H]
  =\mathbb E[\Delta^2\mid H]-2c\mathbb E[\Delta\mid H]+c^2.
\]
The strictly convex quadratic has the unique minimizer $c=\mathbb E[\Delta\mid H]$, proving (i). For (ii), let $G_H(x):=\sP(\Delta\leq x\mid H)$. The left and right derivatives of the convex pinball
risk at $c$ are $G_H(c^-)-\tau$ and $G_H(c)-\tau$. Thus its minimizers are exactly the corrections
satisfying $G_H(c^-)\leq\tau\leq G_H(c)$. Substituting $c=0$ gives Equation~(\ref{eq:zero-quantile}).
Dividing the asymmetric linear loss by $c_u+c_o>0$ preserves its minimizers and gives pinball loss
with $\tau=c_u/(c_u+c_o)$, proving (iii). Absolute loss is twice pinball loss at $\tau=1/2$, so its minimizers are exactly the conditional
medians. Substituting $c=0$ yields Equation~(\ref{eq:zero-median}), and uniqueness holds exactly when
zero is the only conditional median.
\end{proof}

\begin{corollary}[Strict persistence margin from a zero-change atom]
\label{cor:zero-mass-margin}
Let $\Delta$ be scalar and integrable conditional on $H$, with $q:=\sP(\Delta=0\mid H)>1/2$. Under absolute loss, persistence is
the unique Bayes correction and every $c\neq0$ satisfies
\begin{equation*}
 \mathbb E[|\Delta-c|-|\Delta|\mid H]\geq(2q-1)|c|>0.
\end{equation*}
\end{corollary}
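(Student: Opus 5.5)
We prove the inequality pointwise and then take conditional expectations. The margin $(2q-1)|c|$ comes from splitting outcomes into the zero-change atom and its complement. Uniqueness then follows at once, since the bound is strictly positive for $c\neq0$ when $q>1/2$. Corollary~\ref{cor:absolute-main} also gives uniqueness: $q>1/2$ forces $\sP(\Delta<0\mid H)<1/2$ and $\sP(\Delta>0\mid H)<1/2$, so zero is the unique conditional median. The quantitative bound, however, is self-contained and does not need that result.

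Fix $c\neq0$. On the event $\{\Delta=0\}$ the integrand is $|{-c}|-0=|c|$. On the complement, the reverse triangle inequality gives $|\Delta-c|-|\Delta|\geq-|c|$. Integrability of $\Delta$ makes both $|\Delta-c|$ and $|\Delta|$ conditionally integrable, so we may take conditional expectations of the pointwise bound
\[
 |\Delta-c|-|\Delta|\;\geq\;|c|\,\mathbf 1\{\Delta=0\}-|c|\,\mathbf 1\{\Delta\neq0\}.
\]
This yields
\[
 \mathbb E[|\Delta-c|-|\Delta|\mid H]\geq|c|\,q-|c|(1-q)=(2q-1)|c|,
\]
which is strictly positive because $q>1/2$ and $c\neq0$. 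Hence $R_P(c\mid H)>R_P(0\mid H)$ for every $c\neq0$, so persistence is the unique Bayes correction.

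The only point needing care is the conditional-expectation step. We must make sure that subtracting $|\Delta|$ is legitimate, which the integrability hypothesis handles, and that everything is read as almost-sure statements given $H$, using a regular conditional distribution if preferred. There is no real obstacle beyond this bookkeeping. If desired, one can add a remark that the bound is tight: when $\Delta$ takes the value $0$ with probability $q$ and lies on the far side of $c$ otherwise, every inequality above holds with equality.
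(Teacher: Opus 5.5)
Your proposal is correct and follows essentially the same route as the paper's proof. Like the paper, you take the exact excess $|c|$ on the zero atom and the reverse-triangle bound $-|c|$ off it, then combine them in conditional expectation to get $(2q-1)|c|>0$, with your integrability and tightness remarks as harmless extras.
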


\begin{proof}
Fix $H$ and let $c\neq0$. On the event $\{\Delta=0\}$, the excess absolute loss is exactly $|c|$.
On $\{\Delta\neq0\}$, the reverse triangle inequality yields
$|\Delta-c|-|\Delta|\geq-|c|$. Taking conditional expectations and using
$\sP(\Delta=0\mid H)=q$ gives
\[
 \mathbb E[|\Delta-c|-|\Delta|\mid H]
 \geq q|c|-(1-q)|c|=(2q-1)|c|.
\]
The assumption $q>1/2$ makes this lower bound strictly positive for every $c\neq0$. Hence, $c=0$ is the unique
Bayes correction under absolute loss.
\end{proof}

\begin{table}[ht]
\centering
\small
\caption{Exact loss-conditioned decisions for the same law
$\sP(\Delta=-1,0,3)=(0.35,0.40,0.25)$. Bold marks the lower value, and ties are both highlighted.}
\label{tab:loss-conditioned-audit}
\begin{tabular}{lS[table-format=1.1]S[table-format=-1.1]S[table-format=1.2]S[table-format=1.2]}
\toprule
Loss & \multicolumn{1}{c}{Level} & \multicolumn{1}{c}{Bayes correction} & \multicolumn{1}{c}{Persistence risk} & \multicolumn{1}{c}{Bayes risk} \\
\midrule
Squared & \multicolumn{1}{c}{---} & 0.4 & 2.60 & \bfseries 2.44 \\
Absolute & \multicolumn{1}{c}{---} & 0 & \bfseries 1.10 & \bfseries 1.10 \\
Pinball & 0.2 & -1 & 0.43 & \bfseries 0.28 \\
Pinball & 0.5 & 0 & \bfseries 0.55 & \bfseries 0.55 \\
Pinball & 0.8 & 3 & 0.67 & \bfseries 0.52 \\
\bottomrule
\end{tabular}
\end{table}

\begin{proof}[Proof of Proposition~\ref{prop:ranking-separation}]
Fix $r\in(0,1/2)$. Let $H\in\{0,1\}$ with $\sP(H=1)=w$, where $w\in(0,1)$ will be chosen below, and
let $\Delta=Y\in\{0,1\}$ satisfy $\sP(Y=1\mid H=0)=0$ and $\sP(Y=1\mid H=1)=r$. Consider the
occurrence-ranking score $s(H)=H$. Every positive example has score one. Conditional on $Y=0$, the score
equals zero with probability $(1-w)/(1-wr)$ and one otherwise. Under the standard half-credit convention
for ties,
\[
 \operatorname{AUROC}(s)
 =\frac{1-w}{1-wr}+\frac{w(1-r)}{2(1-wr)}
 =1-\frac{w(1-r)}{2(1-wr)}.
\]
This expression converges to one as $w\downarrow0$, so choosing $w>0$ sufficiently small makes the
AUROC exceed $1-\varepsilon$. In both contexts, $\sP(\Delta=1\mid H)<1/2$. Hence, zero is the unique
conditional median, and therefore the unique Bayes correction under absolute loss, by
Corollary~\ref{cor:absolute-main}. Thus, occurrence can be ranked arbitrarily well even when persistence
is optimal at every context.
\end{proof}

\begin{proof}[Proof of Theorem~\ref{thm:summary-nonidentification}]
Use the following two scalar conditional transition laws. Under $P_0$,
\begin{equation*}
 \sP_0(\Delta=-\alpha(H)\mid H)=\frac{p(H)}{2},\quad
 \sP_0(\Delta=0\mid H)=1-p(H),\quad
 \sP_0(\Delta=\alpha(H)\mid H)=\frac{p(H)}{2}.
\end{equation*}
Under $P_1$, with $\beta(H):=\alpha(H)/\sqrt{1-p(H)}$,
\begin{equation*}
 \sP_1(\Delta=0\mid H)=1-p(H),\qquad
 \sP_1(\Delta=\beta(H)\mid H)=p(H).
\end{equation*}
Write $p:=p(H)$, $\alpha:=\alpha(H)$, and $\beta:=\beta(H)$ while conditioning on $H$. Under both laws,
$\mathbf 1\{\Delta\neq0\}$ is Bernoulli with parameter $p$. Together with the fixed marginal law of $H$,
this establishes that the complete joint law of the context and occurrence label is identical. A score $s(H)$ therefore induces
the same score--label law, and hence the same ROC curve and AUROC, under $P_0$ and $P_1$. The occurrence
probability and its Bernoulli entropy are functions of the same $p$.

Under $P_0$, symmetry gives $\mathbb E_0[\Delta\mid H]=0$ and
$\operatorname{Var}_0(\Delta\mid H)=p\alpha^2$.
Under $P_1$, $\mathbb E_1[\Delta\mid H]=p\beta$ and
\begin{equation*}
 \operatorname{Var}_1(\Delta\mid H)
 =p\beta^2-p^2\beta^2=p(1-p)\beta^2=p\alpha^2,
\end{equation*}
where the last equality uses $\beta=\alpha/\sqrt{1-p}$.

Finally, under $P_0$, the conditional CDF jumps from $p/2<1/2$ to $1-p/2>1/2$ at zero.
Thus zero is the unique conditional median by Corollary~\ref{cor:absolute-main}. Under $P_1$, the point $\beta$ has mass $p>1/2$, while
the only remaining mass lies at zero, making $\beta$ the unique conditional median. Absolute-loss Bayes
corrections therefore differ despite equality of every summary named in the theorem. Any gate based solely
on these summaries, including a randomized gate with a common conditional seed law, has the same decision
distribution under $P_0$ and $P_1$ and therefore cannot recover both execution decisions almost surely.
\end{proof}

\subsection{Calibration Guarantees}
\label{app:calibration-guarantees}

Let $B>0$ denote the unit-loss bound, $G\geq1$ the number of groups,
$\delta\in(0,1)$ the target failure probability, and $\mathcal T$ the
$\sigma$-field generated by training and all design choices made before calibration. Conditional on
$\mathcal T$, the executed proposal rule $\pi_Q$, persistence
rule $\pi_0$, pre-outcome group map $g$, clipping rule, and number of groups are fixed. Each calibration unit
$U_i$ contains pre-outcome grouping information $X_i$. The $n$ calibration units $U_1,\ldots,U_n$ are
conditionally i.i.d.\ from a target law $P_U$, which is also the population covered by the guarantee. A unit
may be a transition, series block, episode, or complete training run. Dependence within a unit is allowed. The
units themselves are the independent sampling objects. The declared unit loss satisfies
$L_B(\pi,U)\in[0,B]$, and empty groups receive persistence.
Appendix~\ref{app:assumption-audit} relates these assumptions to the experimental populations.

\begin{proof}[Proof of Theorem~\ref{thm:calibration-main}]
Condition on $\mathcal T$ and the complete vector of calibration group labels. For a represented group,
conditional i.i.d.\ sampling implies that its $Z_i$ are independent, share conditional mean $\mu_g$, and lie
in an interval of width $2B$. For any $t>0$, one-sided Hoeffding concentration \citep{hoeffding1963probability} gives
\[
 \sP\!\left(\mu_g<\widehat\mu_g-t\mid
 \mathcal T,(g(X_j))_{j=1}^n\right)
 \leq\exp\!\left(-\frac{n_g t^2}{2B^2}\right).
\]
Taking $t=B\sqrt{2\log(G/\delta)/n_g}$ makes the right-hand side $\delta/G$. A union bound over the
$G$ predeclared groups makes every represented lower bound valid simultaneously with probability at least
$1-\delta$. An accepted group has a positive lower bound and hence $\mu_g>0$. Acceptance uses the executed
proposal rule, whose expected unit-loss advantage is $\mu_g$, whereas rejection executes persistence itself.
Averaging over the group-label vector proves the claim conditional on $\mathcal T$.
Averaging over $\mathcal T$ also gives the unconditional statement.
\end{proof}

\begin{proof}[Proof of Proposition~\ref{prop:certification-power-main}]
Theorem~\ref{thm:calibration-main} controls upward deviations of the sample mean, whereas this power statement uses the
opposite tail. For each represented group, Hoeffding's inequality gives
\[
 \sP\!\left(\widehat\mu_g<\mu_g-r_g\mid
 \mathcal T,(g(X_j))_{j=1}^n\right)
 \leq\exp\!\left(-\frac{n_g r_g^2}{2B^2}\right)=\frac{\delta}{G},
\]
where $r_g$ is the Hoeffding radius defined in the main text. A union bound yields
$\widehat\mu_g\geq\mu_g-r_g$ simultaneously with probability at least $1-\delta$.
On this event, each group with $\mu_g>2r_g$ satisfies
$\LCB_g=\widehat\mu_g-r_g\geq\mu_g-2r_g>0$ and is accepted.
Averaging over label vectors with the same counts gives the stated conditional guarantee.
For $\mu_g>0$, rearranging the threshold yields $n_g>8B^2\log(G/\delta)/\mu_g^2$.
\end{proof}

\paragraph{Episodes contributing to multiple groups.}
\label{app:episode-group-calibration}
In FourRooms and MuJoCo, one episode can contain observations from several groups.
Let $U_i$ be a complete episode, with episodes conditionally i.i.d.\ given $\mathcal T$.
For each fixed group $g$, define $A_g(U_i)\in\{0,1\}$ to indicate whether the episode contains that
group, and let $L_{B,g}(\pi,U_i)\in[0,B]$ be its within-group mean loss when $A_g(U_i)=1$.
For groups with $\sP(A_g(U)=1\mid\mathcal T)>0$, the population gain is
\[
 \mu_g^{\mathrm{ep}}
 =\mathbb E\!\left[L_{B,g}(\pi_0,U)-L_{B,g}(\pi_Q,U)
 \mid A_g(U)=1,\mathcal T\right].
\]
The proposal, membership rule, and loss aggregation are fixed by $\mathcal T$.
For a given $g$, conditioning on $(A_g(U_i))_i$ leaves independent represented-episode gains with
mean $\mu_g^{\mathrm{ep}}$ and range $[-B,B]$. For $n_g=\sum_i A_g(U_i)>0$, the same Hoeffding radius
therefore applies. Empty groups receive persistence. Integrating over these indicators gives failure probability at most
$\delta/G$ for that group. A union bound yields simultaneous accepted-group improvement.
Groups may share episodes because the union bound permits dependence between groupwise tests.
This certificate concerns the mean gain among episodes containing the group, using the aggregation
defined in Appendix~\ref{app:metric-definitions}.

\section{Six-Type Inventory Model}
\label{app:inventory-theory}

This section describes the abstract state constraints for the six unhealthy-inventory types used in the research forecasting model. For one national SKU-day, let $A\geq0$ denote active inventory. The six unhealthy-inventory coordinates form an overlapping marginal vector
$u\in\mathbb R_+^K$, $K:=6$, with feasible box
$\mathcal B_K(A):=\{u:0\leq u^{(k)}\leq A\ \text{for every }k\}$, where $k$ indexes the six types. No constraint is imposed on
$\sum_k u^{(k)}$, because a physical unit can satisfy multiple operational rules.
Figure~\ref{fig:inventory-overlap-geometry} illustrates this overlap in a three-type projection.
Proposition~\ref{prop:overlap} quantifies the irreducible error incurred by replacing this box with a disjoint simplex. Define the disjoint alternative $\mathcal S_K(A):=\{v\in\mathbb R_+^K:\sum_k v^{(k)}\leq A\}$ and overlap excess $\Omega(u,A):=(\sum_k u^{(k)}-A)_{+}$.

\begin{figure}[!htb]
\centering
\includegraphics[width=0.75\linewidth]{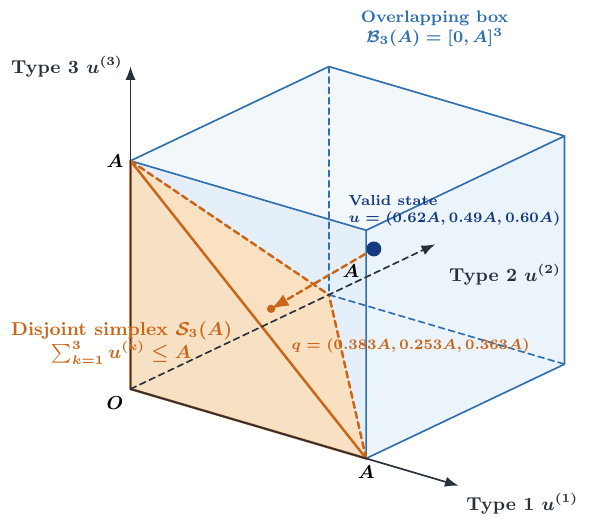}
\caption{Overlapping-inventory geometry. In a three-type projection, the disjoint simplex (orange)
excludes a valid state in the feasible box (blue). The dashed segment is its Euclidean distance to the simplex.}
\label{fig:inventory-overlap-geometry}
\end{figure}

\begin{proposition}[Irreducible error under a disjoint representation]
\label{prop:overlap}
For every $u\in\mathcal B_K(A)$ and $v\in\mathcal S_K(A)$,
\begin{equation*}
 \|u-v\|_1\geq \Omega(u,A),
 \qquad
 \|u-v\|_2\geq\frac{\Omega(u,A)}{\sqrt K}.
\end{equation*}
Thus positive overlap excess creates representation error before any forecasting error for a
simplex-constrained model.
\end{proposition}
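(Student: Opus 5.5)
The $\ell_1$ bound comes from a single linear functional, the all-ones vector, and the $\ell_2$ bound then follows from Cauchy--Schwarz. Fix $u\in\mathcal B_K(A)$ and $v\in\mathcal S_K(A)$. If $\Omega(u,A)=0$, both inequalities hold trivially because norms are nonnegative. So we may assume $\sum_k u^{(k)}>A$, in which case $\Omega(u,A)=\sum_k u^{(k)}-A$.

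\textbf{Step 1: the $\ell_1$ bound.} By the triangle inequality applied coordinatewise,
\[
\|u-v\|_1=\sum_k|u^{(k)}-v^{(k)}|\geq\sum_k\bigl(u^{(k)}-v^{(k)}\bigr)=\sum_k u^{(k)}-\sum_k v^{(k)}.
\]
Since $v\in\mathcal S_K(A)$, we have $\sum_k v^{(k)}\leq A$. Hence the right-hand side is at least $\sum_k u^{(k)}-A=\Omega(u,A)$. Only the simplex constraint on $v$ is used here; the box membership of $u$ plays no role beyond making $\Omega$ well defined.

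\textbf{Step 2: the $\ell_2$ bound.} In $\mathbb R^K$, Cauchy--Schwarz with the all-ones vector gives $\|x\|_1\leq\sqrt K\,\|x\|_2$. Taking $x=u-v$ and combining with Step~1 yields $\|u-v\|_2\geq\Omega(u,A)/\sqrt K$.

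\textbf{Remark on tightness.} One could also observe that the constant is attained: the Euclidean projection onto the half-space $\{\sum_k v^{(k)}\leq A\}$ moves $u$ by $\Omega/\sqrt K$ along $-\mathbf 1$. However, the statement asks only for lower bounds, so this observation is not needed.

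There is no real obstacle here. The only point requiring care is the case split on the sign of $\sum_k u^{(k)}-A$, which is needed to handle the positive part in $\Omega$.
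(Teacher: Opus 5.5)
Your proof is correct and follows the paper's argument: the same case split on $\Omega(u,A)=0$, then the all-ones functional with $\sum_k v^{(k)}\leq A$ for the $\ell_1$ bound. Your explicit Cauchy--Schwarz step $\|x\|_1\leq\sqrt K\|x\|_2$ supplies the $\ell_2$ bound, which the paper leaves implicit. (Your optional tightness remark needs the half-space projection $u-(\Omega/K)\mathbf 1$ to remain nonnegative, since $\mathcal S_K(A)$ also imposes $v\geq 0$; this holds for $u=A\mathbf 1$ but fails for, e.g., $K=3$, $A=1$, $u=(1,1,0)$.)
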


\begin{proof}
If $\Omega(u,A)=0$, the lower bounds are immediate.
Otherwise, for any $v\in\mathcal S_K(A)$,
\[
 \|u-v\|_1
 \geq\left|\sum_k(u^{(k)}-v^{(k)})\right|
 \geq\sum_k u^{(k)}-A=\Omega(u,A).
\]
\end{proof}

The research model applies this feasibility map to candidate states before computing calibration
or evaluation losses. Each of the six type quantities is constrained independently to available active
inventory, preserving the overlapping representation.

\section{Experimental Details and Additional Results}
\label{app:additional-diagnostics}

This appendix specifies the evaluation protocol and provides additional diagnostics, robustness checks,
and benchmark results.

\subsection{Evaluation Protocol}

We define the evaluation losses, calibration assumptions, and statistical intervals used to interpret
the reported results.

\subsubsection{Evaluation Losses}
\label{app:metric-definitions}

Calibration compares persistence and the candidate using the same unit loss $L_B\in[0,1]$.
The definitions below specify normalization, clipping, and aggregation for each empirical setting.
Candidate construction and calibration have distinct loss roles: a Bayes proposal targets its declared
construction loss, while the gate evaluates the fixed executed proposal under $L_B$.
Monash, for example, fits empirical medians under absolute loss and calibrates their clipped normalized
absolute error. M4 and Web Traffic use fixed seasonal proposals. The calibration guarantee applies to
each fixed proposal through its observed bounded-loss gains, including when its construction loss differs
from $L_B$. FourRooms and MuJoCo use the episode--group formulation in
Appendix~\ref{app:episode-group-calibration}.

\paragraph{Six-type unhealthy inventory.}
For SKU $i$, let $\mathcal O_{i,h}$ be its valid forecast origins at horizon $h$, and define
$e_{ioj}^{(k)}:=|\widehat u_{i,o+j}^{(k)}-u_{i,o+j}^{(k)}|$ for origin $o$, lead $j$, and type $k$.
With observed target active inventory $A_{i,o+j}$, the calibration loss is
\[
 L^{\mathrm{inv}}_{B,i,h}
 =\frac{1}{6h|\mathcal O_{i,h}|}
 \sum_{o\in\mathcal O_{i,h}}\sum_{j=1}^{h}\sum_{k=1}^{6}
 \min\!\left\{\frac{e_{ioj}^{(k)}}{\max(A_{i,o+j},1)},1\right\}.
\]
Each type error is normalized and clipped before averaging over types, leads, and origins.
Reported MAE averages the raw errors in the same order, then gives each SKU equal weight.
Inventory-normalized MAE omits clipping. Inventory WAPE pools raw errors and target active inventory
across SKUs, origins, and leads separately for each type, then averages the six ratios.
The pooled inventory denominator is floored at one.

\paragraph{Monash Car Parts.}
Let $e_{ij}$ be the absolute error at evaluation month $j$ for series $i$, and let
$s_i$ be its mean absolute first difference over the initial 27 training months. For each 12-month block,
\[
 L^{\mathrm{Monash}}_{B,i}
 =\frac1{12}\sum_{j=1}^{12}\min\!\left\{\frac{e_{ij}}{\max(s_i,1)},1\right\}.
\]
Test MAE averages raw errors across series and months. MASE averages each series' MAE divided by
the unfloored $s_i$ over the 2,504 series with $s_i>0$.

\paragraph{M4 and Web Traffic Weekly.}
For absolute forecast error $e_{ij}$, define $s_i$ as the mean absolute first difference of the
history available at the forecast origin, floored at $10^{-8}$. The bounded series loss is
\[
 L^{\mathrm{forecast}}_{B,i}=\frac1h\sum_{j=1}^{h}\frac{e_{ij}}{e_{ij}+s_i}.
\]
The horizons are $h=18,8,14$ for M4 Monthly, Quarterly, and Daily, respectively, and $h=8$ for
Web Traffic Weekly. Reported metrics give each series equal weight. MASE divides series MAE by $s_i$.
sMAPE averages $2e_{ij}/(|y_{ij}|+|\widehat y_{ij}|)$, with a zero contribution when both values are zero.
sMAPE is reported as a ratio on $[0,2]$.

\paragraph{Minari FourRooms.}
For a predicted image $\widehat x$, target image $x$ with $m$ scalar entries, and predicted and target
directions $\widehat d,d$, the transition loss is
\[
 \ell^{\mathrm{FR}}=\frac12\left(
 \frac1m\sum_{r=1}^{m}\mathbf1\{\widehat x_r\neq x_r\}
 +\mathbf1\{\widehat d\neq d\}\right).
\]
For episode $i$ and action group $g$, let $\mathcal J_{ig}$ contain its transitions in that group.
Each represented episode contributes the group mean
$L^{\mathrm{FR}}_{B,ig}=(\sum_{j\in\mathcal J_{ig}}\ell^{\mathrm{FR}}_{ij})/|\mathcal J_{ig}|$
to calibration and groupwise test evaluation. Table~\ref{tab:fourrooms-group-gain} reports the resulting
episode-average test gains. Overall prediction loss weights these means by $|\mathcal J_{ig}|$,
giving every transition equal weight.
Bootstrap procedures are specified in Section~\ref{app:statistical-units}.

\paragraph{Minari MuJoCo.}
For rollout window $w$ in episode $i$, let $\widehat s_{iw}$ and $s_{iw}$ be the predicted and target
states at the specified horizon. With training-state coordinate standard deviations
$\sigma_k$ floored at $10^{-6}$, the window NMSE is
\[
 q_{iw}=\frac1d\sum_{k=1}^{d}
 \left(\frac{\widehat s_{iw,k}-s_{iw,k}}{\sigma_k}\right)^2.
\]
Let $\mathcal W_{ig}$ contain the episode's windows in uncertainty group $g$ at that horizon.
Calibration and groupwise bounded-loss evaluation clip each window NMSE before taking the represented
episode--group mean:
\[
 L^{\mathrm{MJ}}_{B,ig}=\frac1{|\mathcal W_{ig}|}
 \sum_{w\in\mathcal W_{ig}}\min\{q_{iw},1\}.
\]
Reported NMSE averages unclipped $q_{iw}$ over all evaluated windows within each episode, then equally
over episodes. Cross-dataset results give each of the nine variants equal weight after averaging its
three seeds. Thus $L^{\mathrm{MJ}}_{B,ig}$ is the groupwise certification target, and NMSE summarizes
prediction error over each complete episode.

\subsubsection{Calibration and Deployment Assumptions}
\label{app:assumption-audit}

Theorem~\ref{thm:calibration-main} certifies positive group-average gain over persistence for calibration
units that are i.i.d.\ conditional on the fixed training and design choices $\mathcal T$. The guarantee
uses the declared bounded unit loss, including its aggregation and clipping
(see Appendix~\ref{app:metric-definitions}). Transfer to deployment requires positive gain to persist in
accepted groups. A common calibration and deployment population is a sufficient condition.
Table~\ref{tab:assumption-audit} records the sampling units, split designs, and population-alignment
conditions for each experiment.

\begin{table}[ht]
\centering
\small
\setlength{\tabcolsep}{3.2pt}
\caption{Calibration units and split designs. Population alignment is assessed under the theorem's
conditional-i.i.d.\ sampling assumption.}
\label{tab:assumption-audit}
\begin{tabularx}{\textwidth}{@{}
>{\raggedright\arraybackslash}p{0.13\textwidth}
>{\raggedright\arraybackslash}p{0.18\textwidth}
>{\raggedright\arraybackslash}p{0.27\textwidth}
>{\raggedright\arraybackslash}X@{}}
\toprule
Setting & Calibration unit & Split design & Population alignment \\
\midrule
Controlled & Generated observation within a group & Independent draws with a fixed count per group & Same generator. Groupwise concentration applies directly \\
Monash & One calibration block per series & Same series, consecutive 12-month blocks & Group-gain stability across forecast origins \\
M4 frequencies & Complete series & Hash-disjoint series. Calibration targets precede the official origin & Common series population and temporal group-gain stability \\
Web Traffic Weekly & Complete page series & Hash-disjoint pages with the final eight observations per page & Common page population. Pooled-history grouping evaluated empirically \\
FourRooms & Episode & Ordered episode prefixes & Stable episode collection law across the split \\
MuJoCo & Episode & Fixed-seed random episode permutation & Common episode population within each dataset \\
Unhealthy inventory & Complete national SKU trajectory & Entity-disjoint folds in one window & Common SKU population. Later-window evaluation assumes temporal gain stability \\
\bottomrule
\end{tabularx}
\end{table}
\FloatBarrier

Each of the 500 controlled repetitions draws a fresh calibration sample within every fixed group.
For inventory and forecasting data, the unit-level independence assumption concerns dependence from
shared calendar effects as well as entity-specific dynamics. Web Traffic Weekly fits grouping thresholds
to pooled pre-outcome histories of calibration and test pages. We treat this sample-dependent grouping
empirically.

\subsubsection{Statistical Intervals}
\label{app:statistical-units}

The phase-transition study uses 500 independent repetitions, and the summary non-identification study uses
5,000 per sample size. Bootstrap intervals are paired, two-sided percentile intervals with empirical
0.025 and 0.975 endpoints. For unhealthy inventory,
each SKU-level shared-minus-persistence contrast is first averaged across the three fixed training seeds.
5,000 bootstrap resamples then draw the 33 SKUs. Inventory intervals are reported in
Section~\ref{app:inventory-robustness}.
Monash intervals use 10,000 bootstrap samples over 2,674 series, with MASE restricted to
the 2,504 series having a positive training scale. M4 uses 10,000 resamples of held-out series in each
frequency, and Web Traffic Weekly uses 5,000 resamples of its 87,222 test series.

FourRooms overall prediction intervals resample 118 complete episodes and recompute
the transition-weighted ratio estimator in each of 10,000 bootstrap replicates. Groupwise gain intervals
use 10,000 paired resamples of the episodes containing the group, averaging their persistence-minus-candidate
loss differences with equal episode weights.

MuJoCo reports the sample standard deviation across three fixed
training seeds for each of nine dataset variants. Its groupwise bounded-gain intervals use 10,000 paired resamples of represented episodes,
with the same equal-episode weighting as calibration. Each interval is computed for one group in one
run. For MuJoCo policy-loss contrasts, the three training seeds are averaged within each test episode.
The 10,000 bootstrap replicates resample episodes separately within each dataset and then equally
weight the nine dataset means, conditional on the fitted models and calibration decisions.
\FloatBarrier

\subsection{Decision Diagnostics}

We examine selection at the movement boundary, decision recovery from loss gains, and the calibration
sample sizes needed for certification.

\subsubsection{Confidence correction at the movement boundary}
\label{app:phase-confidence}

The phase-transition population has zero-change probability $q$ and a unit-change proposal, with
population gain $1-2q$. For $n$ independent calibration observations, the number of unchanged outcomes
satisfies $K\sim\mathrm{Binomial}(n,q)$. The empirical-sign rule accepts when $1-2K/n>0$, and the
LCB rule accepts when $1-2K/n>\sqrt{2\log(G/\delta)/n}$.
Table~\ref{tab:phase-confidence} evaluates these probabilities exactly for the same $G=11$ groups,
$q\in\{0.1,0.2,0.3,0.4,0.45,0.5,0.55,0.6,0.7,0.8,0.9\}$, and $\delta=0.05$ used in
Figure~\ref{fig:phase}. Independence across groups gives the probability of accepting at least one
harmful group as $1-\prod_{q>0.5}(1-a_q)$, where $a_q$ is its acceptance probability.
Harmful groups have strictly negative population gain. The group at $q=0.5$ has zero gain.

\begin{table}[ht]
\centering
\small
\setlength{\tabcolsep}{4pt}
\caption{Exact selection probabilities and population regret at the movement boundary.
Harmful acceptance is the probability of accepting at least one harmful group.
Power averages acceptance over beneficial groups. Coverage and regret equally weight all 11 groups.}
\label{tab:phase-confidence}
\begin{tabular}{llS[table-format=1.4e-2]S[table-format=3.1,table-space-text-post={\%}]S[table-format=3.1,table-space-text-post={\%}]S[table-format=1.6]}
\toprule
$n$ & Rule & \multicolumn{1}{c}{Harmful acceptance} & \multicolumn{1}{c}{Power} & \multicolumn{1}{c}{Coverage} & \multicolumn{1}{c}{Regret} \\
\midrule
50 & Sign & 2.4341e-1 & 92.3\% & 48.3\% & 0.007310 \\
50 & LCB & 3.1350e-5 & 45.0\% & 20.5\% & 0.057217 \\
200 & Sign & 6.9645e-2 & 98.2\% & 49.5\% & 0.001504 \\
200 & LCB & 1.0008e-6 & 66.6\% & 30.3\% & 0.021654 \\
1,000 & Sign & 6.8081e-4 & 100.0\% & 49.9\% & 0.000014 \\
1,000 & LCB & 6.3507e-11 & 89.2\% & 40.6\% & 0.004904 \\
\bottomrule
\end{tabular}
\end{table}

\subsubsection{Numerical illustration of summary non-identification}
\label{app:summary-nonidentification-numerical}

We fix $p=0.65$, $\alpha=1$, $\beta=\alpha/\sqrt{1-p}$, and offer the same candidate correction $\beta$
under both laws in Theorem~\ref{thm:summary-nonidentification}. Loss is absolute error divided by
$\alpha+\beta$.
The candidate's population gain over persistence is $-0.3867$ under $P_0$ and $0.1885$ under $P_1$,
so an oracle given the law identity rejects under $P_0$ and accepts under $P_1$. In contrast, occurrence probability,
Bernoulli entropy, and conditional variance are respectively $0.65$, $0.6474$, and $0.65$ under both
laws. Even the best oracle rule restricted to these identical summaries must take one shared action, so it
persists and incurs $0.0942$ excess loss under an equally weighted mixture. Table~\ref{tab:summary-nonidentification} compares empirical-sign and LCB decision recovery over
5,000 repetitions per sample size, using $n$ calibration observations per law, $G=2$, and
$\delta=0.05$. Both rules recover the opposite decisions as sample size increases.

\begin{table}[ht]
\centering
\small
\setlength{\tabcolsep}{5pt}
\caption{Decision recovery under the two laws. Recovery is the fraction of repetitions with both
decisions correct. Excess loss is relative to an oracle given the law identity under the equal mixture.}
\label{tab:summary-nonidentification}
\begin{tabular}{llS[table-format=1.4]S[table-format=1.4]}
\toprule
Rule & $n$ & \multicolumn{1}{c}{Both decisions correct $\uparrow$} & \multicolumn{1}{c}{Normalized excess loss $\downarrow$} \\
\midrule
Best identical-summary oracle & --- & 0.0000 & 0.0942 \\
Empirical sign & 25 & 0.9422 & 0.0054 \\
Empirical sign & 200 & \bfseries 1.0000 & \bfseries 0.0000 \\
LCB gain & 25 & 0.0002 & 0.0942 \\
LCB gain & 100 & 0.0870 & 0.0860 \\
LCB gain & 200 & 0.4670 & 0.0502 \\
LCB gain & 500 & 0.9926 & 0.0007 \\
LCB gain & 1,000 & \bfseries 1.0000 & \bfseries 0.0000 \\
\bottomrule
\end{tabular}
\end{table}
\FloatBarrier

\subsubsection{Calibration Counts and Certification Thresholds}
\label{app:certification-power}

Table~\ref{tab:certification-power} separates the empirical acceptance condition $\widehat\mu_g>r_g$
from the sufficient population-gain condition $\mu_g>2r_g$ in
Proposition~\ref{prop:certification-power-main}. The finer Monash partitions combine larger radii with
changes in candidate gain (see Section~\ref{app:monash-sensitivity}). M4's ambiguous group has positive
mean gain below its radius.

\begin{table}[ht]
\centering
\small
\setlength{\tabcolsep}{2.5pt}
\caption{Calibration counts, mean gains, and Hoeffding radii. The $2r_g$ column gives the sufficient
population-gain threshold for high-probability acceptance. Monash rows use the equal-count sensitivity
partitions in Section~\ref{app:monash-sensitivity}. For these rows, radii are group medians, and gain and count intervals
report ranges.}
\label{tab:certification-power}
\begin{tabular}{llllS[table-format=1.4]S[table-format=1.4]c}
\toprule
Setting & Group & $n_g$ & $\widehat\mu_g$ & \multicolumn{1}{c}{$r_g$} & \multicolumn{1}{c}{$2r_g$} & \multicolumn{1}{c}{Accepted} \\
\midrule
Monash, $G=2$ & range & 1,337 & [0.0929,0.1164] & 0.0743 & 0.1486 & 2/2 \\
Monash, $G=4$ & range & 668--669 & [-0.1566,0.1051] & 0.1145 & 0.2290 & 0/4 \\
Monash, $G=8$ & range & 334--335 & [-0.0571,0.1384] & 0.1743 & 0.3487 & 0/8 \\
M4, $G=3$ & Candidate-favored & 2,568 & 0.0920 & 0.0565 & 0.1129 & yes \\
M4, $G=3$ & Ambiguous & 1,518 & 0.0192 & 0.0734 & 0.1469 & no \\
M4, $G=3$ & Persistence-favored & 15,230 & -0.0487 & 0.0232 & 0.0464 & no \\
\bottomrule
\end{tabular}
\end{table}
\FloatBarrier

\subsection{Selector and Grouping Analyses}

We compare selection rules for fixed candidates and examine sensitivity to grouping and candidate fitting.

\subsubsection{Selector comparison with fixed candidates}
\label{app:ablation}

Table~\ref{tab:decision-ablation} compares the empirical-sign and LCB rules with fixed candidates and
groups. The sign rule accepts a group when its calibration mean gain is positive. In M4 Monthly,
Quarterly, FourRooms, and short-horizon MuJoCo, confidence correction reduces coverage by excluding
additional groups accepted by the sign rule. These groups have positive test gains. Both rules make
the same decisions on Monash, M4 Daily, and Web Traffic.

\begin{table}[!htb]
\centering
\small
\setlength{\tabcolsep}{3.5pt}
\caption{Confidence correction with fixed candidates (loss $\downarrow$).
$\Delta$ is LCB-minus-sign loss. Coverage counts series for forecasting,
transitions for FourRooms, and prediction windows for MuJoCo. MuJoCo coverage averages seeds and
dataset variants. Losses are compared within rows.}
\label{tab:decision-ablation}
\begin{tabular}{lS[table-format=2.4]S[table-format=2.4]S[table-format=+1.4]S[table-format=3.1,table-space-text-post={\%}]S[table-format=3.1,table-space-text-post={\%}]}
\toprule
Setting & \multicolumn{1}{c}{Sign} & \multicolumn{1}{c}{LCB} & \multicolumn{1}{c}{$\Delta$} & \multicolumn{1}{c}{Sign cov.} & \multicolumn{1}{c}{LCB cov.} \\
\midrule
Monash (MAE) & 0.3913 & 0.3913 & 0 & 100.0\% & 100.0\% \\
M4 Monthly (bounded) & 0.5860 & 0.5878 & +0.0019 & 22.2\% & 14.0\% \\
M4 Quarterly (bounded) & 0.5789 & 0.5871 & +0.0082 & 17.6\% & 0.0\% \\
M4 Daily (bounded) & 0.6198 & 0.6198 & 0 & 0.0\% & 0.0\% \\
Web Traffic (bounded) & 0.3031 & 0.3031 & 0 & 0.0\% & 0.0\% \\
FourRooms (loss) & 0.0943 & 0.1220 & +0.0277 & 100.0\% & 20.4\% \\
MuJoCo, $h=1$ (clipped) & 0.0157 & 0.0669 & +0.0512 & 100.0\% & 33.3\% \\
MuJoCo, $h=5$ (clipped) & 0.0435 & 0.0730 & +0.0295 & 100.0\% & 86.2\% \\
MuJoCo, $h=10$ (clipped) & 0.0694 & 0.0694 & 0 & 100.0\% & 100.0\% \\
MuJoCo, $h=20$ (clipped) & 0.1224 & 0.1224 & 0 & 100.0\% & 100.0\% \\
\bottomrule
\end{tabular}
\end{table}

The MuJoCo comparison uses the 27 repeated runs in Section~\ref{app:mujoco-bounded-gains}, with
clipped losses and aggregation defined in Sections~\ref{app:metric-definitions}
and~\ref{app:statistical-units}. Positive calibration mean gains in every group make the sign rule
identical to always execute.
LCB-minus-sign clipped-loss differences at $h=1,5$ have paired 95\% intervals
$[0.0501,0.0526]$ and $[0.0287,0.0303]$, respectively. M4 Monthly's ambiguous group has positive test
gain $0.0227$ with interval $[0.0177,0.0276]$, accounting for the sign rule's additional improvement.

\FloatBarrier

\subsubsection{Monash grouping sensitivity}
\label{app:monash-sensitivity}

This post-hoc analysis varies both the grouping and its fitted candidates. For
$K_{\mathrm{grp}}\in\{1,2,4,8\}$, series are ordered by training zero fraction, with identifier-based
tie breaking, and divided into equal-count strata. A separate empirical-median candidate is fitted from
the training observations in each stratum. Table~\ref{tab:monash-grouping} compares always executing,
empirical-sign selection, and Hoeffding selection for each fitted configuration. At four and eight strata,
candidate gains have mixed signs and the empirical-sign rule improves MAE over
always executing. Hoeffding selection rejects every group: all calibration mean gains lie below their radii
(see Table~\ref{tab:certification-power}).

\begin{table}[ht]
\centering
\small
\setlength{\tabcolsep}{4pt}
\caption{Monash sensitivity to grouping and candidate fitting (test MAE $\downarrow$).
``Mixed'' indicates positive and negative stratum-level test gains within the same configuration.
Candidates are refitted for each $K_{\mathrm{grp}}$. Bold marks row minima.}
\label{tab:monash-grouping}
\begin{tabular}{lS[table-format=1.3]S[table-format=1.3]S[table-format=1.3]S[table-format=1.3]S[table-format=3.1,table-space-text-post={\%}]S[table-format=3.1,table-space-text-post={\%}]c}
\toprule
$K_{\mathrm{grp}}$ & \multicolumn{1}{c}{Persist.} & \multicolumn{1}{c}{Candidate} & \multicolumn{1}{c}{Sign gate} & \multicolumn{1}{c}{Hoeffding} & \multicolumn{1}{c}{Sign cov.} & \multicolumn{1}{c}{Hoeffding cov.} & \multicolumn{1}{c}{Mixed?} \\
\midrule
1 & 0.573 & \bfseries 0.391 & \bfseries 0.391 & \bfseries 0.391 & 100.0\% & 100.0\% & no \\
2 & 0.573 & \bfseries 0.391 & \bfseries 0.391 & \bfseries 0.391 & 100.0\% & 100.0\% & no \\
4 & 0.573 & 0.492 & \bfseries 0.446 & 0.573 & 75.0\% & 0.0\% & yes \\
8 & 0.573 & 0.429 & \bfseries 0.420 & 0.573 & 87.5\% & 0.0\% & yes \\
\bottomrule
\end{tabular}
\end{table}
\FloatBarrier

\subsection{Inventory Results Across Metrics and Time}
\label{app:inventory-robustness}

The shared inventory candidate has higher mean error than persistence across raw MAE,
inventory-normalized MAE, and inventory WAPE (see Table~\ref{tab:inventory-robustness}). In the main
evaluation, the paired 95\% bootstrap intervals for shared-minus-persistence raw MAE at $h=1,7,14,30$
are $[-0.0006,0.0061]$, $[0.0005,0.0187]$, $[0.0007,0.0309]$, and $[0.0002,0.0512]$, respectively.
Only the $h=1$ interval includes zero. Section~\ref{app:statistical-units} specifies the bootstrap procedure.
The same error ordering holds at a later historical forecast origin, using only earlier transitions for training.

\begin{table}[ht]
\centering
\small
\setlength{\tabcolsep}{2.5pt}
\caption{Inventory prediction across scales and time (all metrics $\downarrow$).
P denotes persistence and S the three-seed mean of the shared ensemble.
The later-window column evaluates a single historical forecast origin. Bold marks each pair's lower loss.}
\label{tab:inventory-robustness}
\begin{tabular}{l@{\quad}S[table-format=2.3]S[table-format=2.3]@{\quad}S[table-format=1.5]S[table-format=1.5]@{\quad}S[table-format=1.5]S[table-format=1.5]@{\quad}S[table-format=1.3]S[table-format=1.3]}
\toprule
& \multicolumn{2}{c}{Raw MAE} & \multicolumn{2}{c}{Normalized MAE}
& \multicolumn{2}{c}{WAPE} & \multicolumn{2}{c}{Later-window MAE} \\
\cmidrule(lr){2-3}\cmidrule(lr){4-5}\cmidrule(lr){6-7}\cmidrule(lr){8-9}
$h$ & \multicolumn{1}{c}{P} & \multicolumn{1}{c}{S} & \multicolumn{1}{c}{P} & \multicolumn{1}{c}{S} & \multicolumn{1}{c}{P} & \multicolumn{1}{c}{S} & \multicolumn{1}{c}{P} & \multicolumn{1}{c}{S} \\
\midrule
1  & \bfseries 1.027 & 1.030 & \bfseries 0.00864 & 0.00882 & \bfseries 0.00874 & 0.00876 & \bfseries 0.293 & 0.295 \\
7  & \bfseries 3.958 & 3.967 & \bfseries 0.03108 & 0.03145 & \bfseries 0.03366 & 0.03373 & \bfseries 1.356 & 1.364 \\
14 & \bfseries 6.837 & 6.850 & \bfseries 0.06226 & 0.06268 & \bfseries 0.05804 & 0.05815 & \bfseries 2.420 & 2.428 \\
30 & \bfseries 13.737 & 13.759 & \bfseries 0.11348 & 0.11404 & \bfseries 0.11667 & 0.11685 & \bfseries 7.365 & 7.385 \\
\bottomrule
\end{tabular}
\end{table}

\FloatBarrier

\subsection{Additional Benchmark Results}

We report additional forecasting evaluations, FourRooms groupwise gains and closed-loop results, and
MuJoCo groupwise bounded-loss gains.

\subsubsection{Cross-frequency and external evaluation}
\label{app:cross-frequency-diagnostics}

Table~\ref{tab:cross-frequency} reports the Monthly result and subsequent forecasting evaluations.
Quarterly \citep{makridakis2018m4} uses the locally frozen Hoeffding protocol and rejects all groups. The pre-specified
empirical-sign comparator accepts the candidate-favored group and improves on both extreme policies.
Its bounded-loss difference against persistence is $-0.00817$, with a paired 95\% bootstrap interval
of $[-0.00910,-0.00728]$. Against always accepting, the difference is $-0.03844$, with an interval
of $[-0.03982,-0.03702]$.

Quarterly calibration data identify the role of the confidence radius. For improvement
$Z\in[-B,B]$, unbiased sample variance
$\widehat V_g=(\sum_{i:g(X_i)=g}(Z_i-\widehat\mu_g)^2)/(n_g-1)$ with $n_g\geq2$, and $G$
predeclared groups, the simultaneous
empirical-Bernstein radius is
\[
r_g^{\mathrm{EB}}=
\sqrt{\frac{2\widehat V_g\log(2G/\delta)}{n_g}}
+\frac{14B\log(2G/\delta)}{3(n_g-1)},
\]
obtained by applying Theorem~4 of \citet{maurer2009empirical} to $(B-Z)/(2B)\in[0,1]$
with failure probability $\delta/G$, then taking a union bound over groups. Groups with fewer than two calibration units receive persistence.
In the Quarterly candidate-favored group, $n_g=1{,}540$, the mean gain is
$0.0502$, and the sample variance is $0.0169$. The Hoeffding and empirical-Bernstein LCBs are respectively
$-0.0227$ and $0.0254$.

The variance-adaptive rule, split, groups, and criterion of improvement over both baselines were locally
frozen before downloading M4 Daily. Daily rejects every group: the calibration counts are
1/19/1,640 for candidate-favored/ambiguous/persistence-favored, and all three mean gains are nonpositive.
The selector matches persistence, which has lower test loss than always executing the candidate.

The Web Traffic Weekly protocol was also locally frozen before downloading the data
\citep{godahewa2021monash}. The annual-seasonal proposal was evaluated on 57,841 calibration and 87,222
held-out test page series. The three calibration groups had mean gains of
$-0.0408$, $-0.0840$, and $-0.2092$, with empirical-Bernstein radii below $0.0065$. Hence every gate
rejected, and selective execution matched persistence. Its paired 95\% bootstrap interval for the
loss difference against always executing was $[-0.1120,-0.1092]$.

\begin{table}[ht]
\centering
\small
\setlength{\tabcolsep}{3.5pt}
\caption{Forecasting replication results (bounded loss $\downarrow$). Q-Sign is the pre-specified
Quarterly empirical-sign comparator. EB is the empirical-Bernstein gate. The final column indicates
lower test loss than both persistence and always executing. Bold marks row minima.}
\label{tab:cross-frequency}
\begin{tabular}{lS[table-format=1.3]S[table-format=1.3]S[table-format=1.3]S[table-format=3.1,table-space-text-post={\%}]c}
\toprule
Setting & \multicolumn{1}{c}{Persist.} & \multicolumn{1}{c}{Always} & \multicolumn{1}{c}{Selected} & \multicolumn{1}{c}{Coverage} & \multicolumn{1}{c}{Beats both} \\
\midrule
M4 Monthly, Hoeffding & 0.599 & 0.621 & \bfseries 0.588 & 14.0\% & Yes \\
M4 Quarterly, Hoeffding & \bfseries 0.587 & 0.617 & \bfseries 0.587 & 0.0\% & No \\
M4 Quarterly, Q-Sign & 0.587 & 0.617 & \bfseries 0.579 & 17.6\% & Yes \\
M4 Daily, EB & \bfseries 0.620 & 0.672 & \bfseries 0.620 & 0.0\% & No \\
Web Traffic Weekly, EB & \bfseries 0.303 & 0.414 & \bfseries 0.303 & 0.0\% & No \\
\bottomrule
\end{tabular}
\end{table}
\FloatBarrier

\subsubsection{FourRooms groupwise gains and closed-loop evaluation}
\label{app:fourrooms-closed-loop}
Table~\ref{tab:fourrooms-group-gain} evaluates the fixed candidate under the episode--group loss used
for calibration. Both groups have positive test gains, with calibration selecting the turn group.

\begin{table}[ht]
\centering
\small
\setlength{\tabcolsep}{5pt}
\caption{FourRooms groupwise bounded-loss gain over persistence. Test gains equally weight episodes
containing the group. Intervals are paired 95\% episode bootstrap intervals for each group.}
\label{tab:fourrooms-group-gain}
\begin{tabular}{lcS[table-format=-1.4]rS[table-format=1.4]l}
\toprule
Group & \multicolumn{1}{c}{Decision} & \multicolumn{1}{c}{Cal.\ LCB} & \multicolumn{1}{c}{Test episodes} & \multicolumn{1}{c}{Test gain} & 95\% interval \\
\midrule
Turn & Execute & 0.0291 & 117 & 0.2964 & $[0.2625,0.3292]$ \\
Forward & Persist & -0.2184 & 118 & 0.0329 & $[0.0287,0.0372]$ \\
\bottomrule
\end{tabular}
\end{table}

A shared controller scores proposed next observations using a training-only nearest-neighbor estimate
of remaining expert steps. Across 30 evaluation seeds, persistence and LCB execution each succeed in
$2/30$ episodes, and always executing succeeds in $1/30$. LCB execution coverage averages $23.9\%$.
The paired success-rate difference against persistence is $0.000$ with a 95\% interval of
$[-0.100,0.100]$. The pilot shows similarly low navigation success across all three execution rules.

\subsubsection{MuJoCo groupwise bounded-loss gains}
\label{app:mujoco-bounded-gains}

The bounded-loss evaluation repeats the full 27-run configuration with the same datasets, splits,
training seeds, and gate settings, retaining newly fitted models and window-level losses.
All 324 group acceptance decisions match the original runs. The maximum absolute differences in
calibration LCB and unclipped group gain are $0.00163$ and $0.00407$, respectively.
Table~\ref{tab:mujoco-bounded-gains} reports test gains under $L^{\mathrm{MJ}}_{B,ig}$ for both accepted
and rejected groups. All 324 gains have individual 95\% bootstrap intervals with positive lower endpoints.
The seven groups with negative unclipped gains have positive bounded gains ranging from $0.3688$ to $0.7560$.

\begin{table}[ht]
\centering
\small
\setlength{\tabcolsep}{5pt}
\caption{MuJoCo groupwise bounded-loss gains in the 27 repeated runs. Gain equally weights represented
episodes. Positive CI counts groups whose individual 95\% bootstrap interval lies above zero.
Minima are taken within each acceptance category.}
\label{tab:mujoco-bounded-gains}
\begin{tabular}{lcrrS[table-format=1.4]S[table-format=1.4]}
\toprule
$h$ & \multicolumn{1}{c}{Decision} & \multicolumn{1}{c}{Groups} & \multicolumn{1}{c}{Positive CI} & \multicolumn{1}{c}{Min.\ gain} & \multicolumn{1}{c}{Min.\ CI lower} \\
\midrule
1 & Accept & 27 & 27 & 0.7455 & 0.7283 \\
1 & Reject & 54 & 54 & 0.0177 & 0.0166 \\
5 & Accept & 70 & 70 & 0.2509 & 0.2366 \\
5 & Reject & 11 & 11 & 0.1317 & 0.1230 \\
10 & Accept & 81 & 81 & 0.3501 & 0.3305 \\
20 & Accept & 81 & 81 & 0.3688 & 0.3577 \\
\bottomrule
\end{tabular}
\end{table}
\FloatBarrier

\section{Experiment Configurations}
\label{app:reproducibility}

Tables~\ref{tab:repro-configs} and~\ref{tab:repro-control-configs} summarize the experiment configurations.
Architectures and optimizers were fixed before test-result aggregation. Candidate fitting and gate
selection use the training and calibration splits, respectively. Test-oracle rules provide diagnostic
reference values. Appendix~\ref{app:assumption-audit} specifies sampling assumptions, and
Appendix~\ref{app:metric-definitions} defines the losses and aggregation order.

\begingroup
\small
\setlength{\tabcolsep}{3.2pt}
\setlength{\LTcapwidth}{\textwidth}
\setlength{\LTpre}{10pt}
\setlength{\LTpost}{10pt}
% Match the font and six-point lower gap of the other table captions.
\makeatletter
\long\def\LT@makecaption#1#2#3{%
  \LT@mcol\LT@cols c{\hbox to\z@{\hss
    \parbox[t]\LTcapwidth{\normalfont\normalsize #1{#2: }#3\par\vskip6pt}%
  \hss}}}
\makeatother
\begin{longtable}{@{}
>{\raggedright\arraybackslash}p{0.14\textwidth}
>{\raggedright\arraybackslash}p{0.20\textwidth}
>{\raggedright\arraybackslash}p{0.25\textwidth}
>{\raggedright\arraybackslash}p{\dimexpr0.41\textwidth-6\tabcolsep\relax}@{}}
\caption{Forecasting and inventory configurations. Counts refer to calibration and test units where paired.}
\label{tab:repro-configs}\\
\toprule
Experiment & Split / sampling unit & Candidate and optimization & Gate and evaluation \\
\midrule
\endfirsthead
\multicolumn{4}{l}{\small Table~\thetable\ continued.}\\[4pt]
\toprule
Experiment & Split / sampling unit & Candidate and optimization & Gate and evaluation \\
\midrule
\endhead

Controlled & Independent observations within each group, 500 repetitions & Fixed unit correction, seed 20260827 &
$n_g\in\{50,200,1000\}$, $\delta=0.05$, zero mass $0.10$--$0.90$ \\
Summary non-identification & Independent observations from each law, 5,000 repetitions & Common correction $\beta$, seed 20260829 &
$p=0.65$, $\alpha=1$, $n_g\in\{25,50,100,200,500,1000\}$, $\delta=0.05$ \\
Monash & 27/12/12 months, rolling one-step evaluation & Groupwise empirical median of training target values &
Training zero-fraction threshold $0.75$, $B=1$, $\delta=0.05$, minimum 100 series, seasonal period
12 \\
M4 Monthly & Hash-disjoint 40/60\% calibration/test series, 19,316/28,684 & Seasonal persistence, three
groups fixed by train-only backtest ratio & Horizon 18, $B=1$, $\delta=0.05$, minimum 500 series, 10,000
paired series bootstraps \\
M4 Q/D diagnostics & Hash-disjoint complete series, 9,557/14,443 Quarterly and 1,660/2,567 Daily & Seasonal
persistence with periods 4/7, unchanged train-only ratio groups & Horizons 8/14, Quarterly Hoeffding,
Daily empirical-Bernstein, 10,000 paired bootstraps \\
Web Traffic Weekly & Hash-disjoint complete page series, 57,841/87,222 & Annual seasonal persistence,
three groups from pooled pre-outcome backtest-ratio tertiles & Horizon 8, empirical-Bernstein, minimum 1,000
series, 5,000 paired series bootstraps \\
Unhealthy inventory & Entity-grouped 5-fold CV, 33 SKUs & Type-aware probabilistic MLP ensemble with three training seeds &
Horizons $h\in\{1,7,14,30\}$, 5,000 entity bootstraps \\
\bottomrule
\end{longtable}
\endgroup

\clearpage
\begin{table*}[!ht]
\centering
\small
\setlength{\tabcolsep}{4pt}
\caption{Action-conditioned experiment configurations, with complete episodes as the sampling units.}
\label{tab:repro-control-configs}
\begin{tabularx}{\textwidth}{@{}
>{\raggedright\arraybackslash}p{0.13\textwidth}
>{\raggedright\arraybackslash}p{0.22\textwidth}
>{\raggedright\arraybackslash}p{0.31\textwidth}
>{\raggedright\arraybackslash}X@{}}
\toprule
Experiment & Split / sampling unit & Candidate and optimization & Gate and evaluation \\
\midrule
FourRooms & Ordered episodes: 354 training, 118 calibration, 118 test & Empirical action-conditioned transition table,
no neural optimization & Turn/forward groups, $\delta=0.05$, minimum 30 episodes, 10,000 paired episode
bootstraps \\
MuJoCo & Complete episodes, 70/15/15\%, three seeds & 3-member delta-MLP ensemble, two width-256 hidden
layers, SiLU, 8 epochs, batch 4096, AdamW lr $3\!\times\!10^{-4}$, weight decay $10^{-4}$, bootstrap
probability 0.8 & $h\in\{1,5,10,20\}$, horizon $\times$ train-fitted uncertainty tertile, $\delta=0.05$,
minimum 30 episodes \\
FourRooms pilot & 30 paired environment seeds, 2027100--2027129 & Shared controller with 5 nearest neighbors and a
maximum of 100 steps & Same turn/forward gate, 5,000 paired bootstrap resamples \\
\bottomrule
\end{tabularx}
\end{table*}
\FloatBarrier

For MuJoCo, each ensemble member is trained on an independent Bernoulli bootstrap of training
transitions. Normalization statistics and uncertainty tertiles use training units only. MuJoCo uses
training/calibration/test window strides 10/5/5, at most 50,000 train windows per gate horizon, and an
evaluation batch of 8,192. Its training seeds are 42, 20260827, and 314159.

\FloatBarrier

\end{document}